\DeclareMathOperator{\EX}{\mathbb{E}}
\newtheorem{example}{Example}
\newcommand{\ie}{{\em i.e., }}
\newcommand{\eg}{{\em e.g., }}
\renewcommand\vec{\boldsymbol}
\newcommand{\Z}[2]{\vec{z}_{{#1},{#2}}}
\newcommand{\Zonly}{\vec{z}}
\newcommand{\z}[2]{\vec{z}_{{#1},{#2}}}
\newcommand{\zonly}{\vec{z}}
\newcommand{\X}[2]{\vec{x}_{{#1},{#2}}}
\newcommand{\xonly}{\vec{x}}
\newcommand{\Y}[2]{\vec{y}_{{#1},{#2}}}
\newcommand{\Sk}[1][k]{\vec{S}_{#1}}
\newcommand{\sk}[1][k]{\vec{S}_{#1}}
\newcommand{\Mk}[1][k]{M_{#1}}
\newcommand{\mk}[1][k]{M_{#1}}
\newcommand{\Thetak}[1][k]{\vec{\theta}_{#1}}
\newcommand{\thetak}[1][k]{\vec{\theta}_{#1}}
\newcommand{\phik}[1][k]{\vec{\phi}_{#1}}
\newcommand{\hk}[1][k]{\vec{h}_{#1}}
\newcommand{\ck}[1][k]{\vec{c}_{#1}}
\newcommand{\norm}[1]{\| #1 \|^2}
\newcommand{\inpr}[2]{\langle #1 , #2 \rangle}
\theoremstyle{plain}
\newtheorem{theorem}{Theorem}[section]
\newtheorem{lemma}[theorem]{Lemma}
\theoremstyle{definition}
\theoremstyle{remark}
\newtheorem{remark}[theorem]{Remark}
\title{Improved Generalization Bounds for \\
Communication Efficient Federated Learning
}
\author{
  Peyman Gholami, Hulya Seferoglu \\
  University of Illinois at Chicago \\
  Chicago, USA\\
  \texttt{\{pghola2, hulya\}@uic.edu} \\
}
\begin{document}
\maketitle

\begin{abstract}



This paper focuses on reducing the communication cost of federated learning by exploring generalization bounds and representation learning. We first characterize a tighter generalization bound for one-round federated learning based on local clients' generalizations and heterogeneity of data distribution (non-iid scenario). We also characterize a generalization bound in R-round federated learning and its relation to the number of local updates (local stochastic gradient descents (SGDs)). Then, based on our generalization bound analysis and our representation learning interpretation of this analysis, we show for the first time that less frequent aggregations, hence more local updates, for the representation extractor (usually corresponds to initial layers) leads to the creation of more generalizable models, particularly for non-iid scenarios. We design a novel Federated Learning with Adaptive Local Steps (FedALS) algorithm based on our generalization bound and representation learning analysis.
FedALS employs varying aggregation frequencies for different parts of the model, so reduces the communication cost. The paper is followed with experimental results showing the effectiveness of FedALS.


\end{abstract}

\keywords{Federated Learning \and Generalization Bound \and Distributed Optimization \and Communication Efficiency}

\section{Introduction}

Federated learning advocates that multiple clients collaboratively train machine learning models under the coordination of a parameter server (central aggregator) \citep{McMahan2016CommunicationEfficientLO}. This approach has great potential for preserving the privacy of data stored at clients while simultaneously leveraging the computational capacities of all clients.
Despite its promise, federated learning still suffers from high communication costs between clients and the parameter server. 

In a federated learning setup, a parameter server oversees a global model and distributes it to participating clients. These clients then conduct local training using their own data. Then, the clients send their model updates to the parameter server, which aggregates them to a global model. This process continues until convergence. Exchanging machine learning models is costly, especially for large models, which are typical in today's machine learning applications \citep{Konecn2016FederatedLS,Zhang2013InformationtheoreticLB,BarnesLowerBounf,Braverman2015CommunicationLB}. Furthermore, the uplink bandwidth of clients may be limited, time-varying and expensive. Thus, there is an increasing interest in reducing the communication cost of federated learning especially by taking advantage of multiple local updates also known as ``Local SGD'' \citep{Stich2018LocalSC,Stich2019TheEF,Wang2018unified}. The crucial questions in this context are (i) how long clients shall do Local SGD, (ii) when they shall aggregate their local models, and (iii) which parts of the model shall be aggregated. The goal of this paper is to address these questions and reduce communication costs without hurting convergence.

The primary purpose of communication in federated learning is to periodically aggregate local models to reduce the consensus distance among clients. This practice helps maintain the overall optimization process on a trajectory toward global optimization. It is important to note that when the consensus distance among clients becomes substantial, the convergence rate reduces.
This occurs as individual clients gradually veer towards their respective local optima without being synchronized with the models from other clients.
This issue is amplified when the data distribution among clients is non-iid.
It has been demonstrated that the consensus distance is correlated to (i) the randomness in each client's own dataset, which causes variation in consecutive local gradients, as well as (ii) the dissimilarity in loss functions among clients due to non-iidness \citep{Stich2019TheEF,gholami2023digest}.
%
%
More specifically, the consensus distance at iteration $t$ is defined as $\frac{1}{K}\sum_{k=1}^K \norm{\hat{{\thetak[t]}}  -\thetak[k,t]}$, where $\hat{{\thetak[t]}} = \frac{1}{K}\sum_{k=1}^K \thetak[k,t]$, $K$ is the number of clients, $\thetak[k,t]$ is the local model at client $k$ at iteration $t$, and $\norm{\cdot}$ is squared $l_2$ norm.
Note that the consensus distance goes to zero when global aggregation is performed at each communication round. This makes the communication of models between clients and the parameter server crucial, but this introduces significant communication overhead.  Our goal in this paper is to reduce the communication overhead of federated learning through the following contributions. 

\textit{\underline{Contribution I:} Improved Generalization Error Bound.} The generalization error of a learning model is defined as the difference between the model's empirical risk and population risks. (We provide a mathematical definition in Section \ref{sec:back}).
Existing approaches for training models mostly minimize the empirical risk or its variants. However, a small population risk is desired showing how well the model performs in the test phase as it denotes the loss that occurs when new samples are randomly drawn from the distribution.
Note that a small empirical risk and a reduced generalization error correspond to a low population risk.
Thus, there is an increasing interest in establishing an upper limit for the generalization error and understanding the underlying factors that affect the generalization error. The generalization error analysis is also important to quantitatively assess the generalization characteristics of trained models, provide reliable guarantees concerning their anticipated performance quality, and design new models and systems. 

In this paper, we offer a tighter generalization bound compared to the state of the art \cite{Barnes2022ImprovedIG,9154277,sun2023understanding} for one-round federated learning, considering local clients' generalizations and non-iidness (i.e., heterogeneous data distribution across the clients). Additionally, we characterize the generalization error bound in R-round federated learning.

\textit{\underline{Contribution II:} Representation Learning Interpretation. }
Recent studies have demonstrated that the concept of representation learning is a promising approach to reducing the communication cost of federated learning \citep{Collins2021ExploitingSR}. 
%
%
This is achieved by leveraging the shared representations that exist in the datasets of all clients. For example, let us consider a federated learning application for image classification, where different clients have datasets of different animals.  Despite each client having a different dataset (one client has dog images, another has cat images, etc.), these images usually have common features such as an eye/ear shape.   
%
%
These shared features, typically extracted in the same way for different types of animals, require consistent layers of a neural network to extract them, whether the animal is a dog or a cat. As a result, these layers demonstrate similarity (i.e., less variation) across clients even when the datasets are non-iid. 
This implies that the consensus distance for this part of the model (feature extraction) is likely smaller. Based on these observations, our key idea is to reduce the aggregation frequency of the layers that show high similarity, where these layers are updated locally between consecutive aggregations. This approach would reduce the communication cost of federated learning as some layers are aggregated, hence their parameters are exchanged, less frequently.
The next example scratches the surface of the problem for a toy example. 





\begin{example}
We consider a federated learning setup of five clients with a central parameter server to train a ResNet-$20$ \citep{He2015resnet} on a heterogeneous partition of CIFAR-$10$ dataset \citep{cifar10}.
We use Federated Averaging (FedAvg) \citep{McMahan2016CommunicationEfficientLO} as an aggregation algorithm since it is the dominant algorithm in federated learning.
%
We applied FedAvg with $50$ local steps prior to each averaging step, denoted as $\tau = 50$.
Non-iidness is introduced by allocating $2$ classes to each client.
Finally, we evaluate the quantity of the average consensus distance for each model layer during the optimization in Fig. \ref{consensus}.
It is clear that the initial layers have smaller consensus distance as compared to the final layers. 
This is due to initial layers' role in extracting representations from input data and their higher similarity across clients.
\end{example}

\begin{figure}[tb]
\centering
\includegraphics[width=0.4\textwidth]{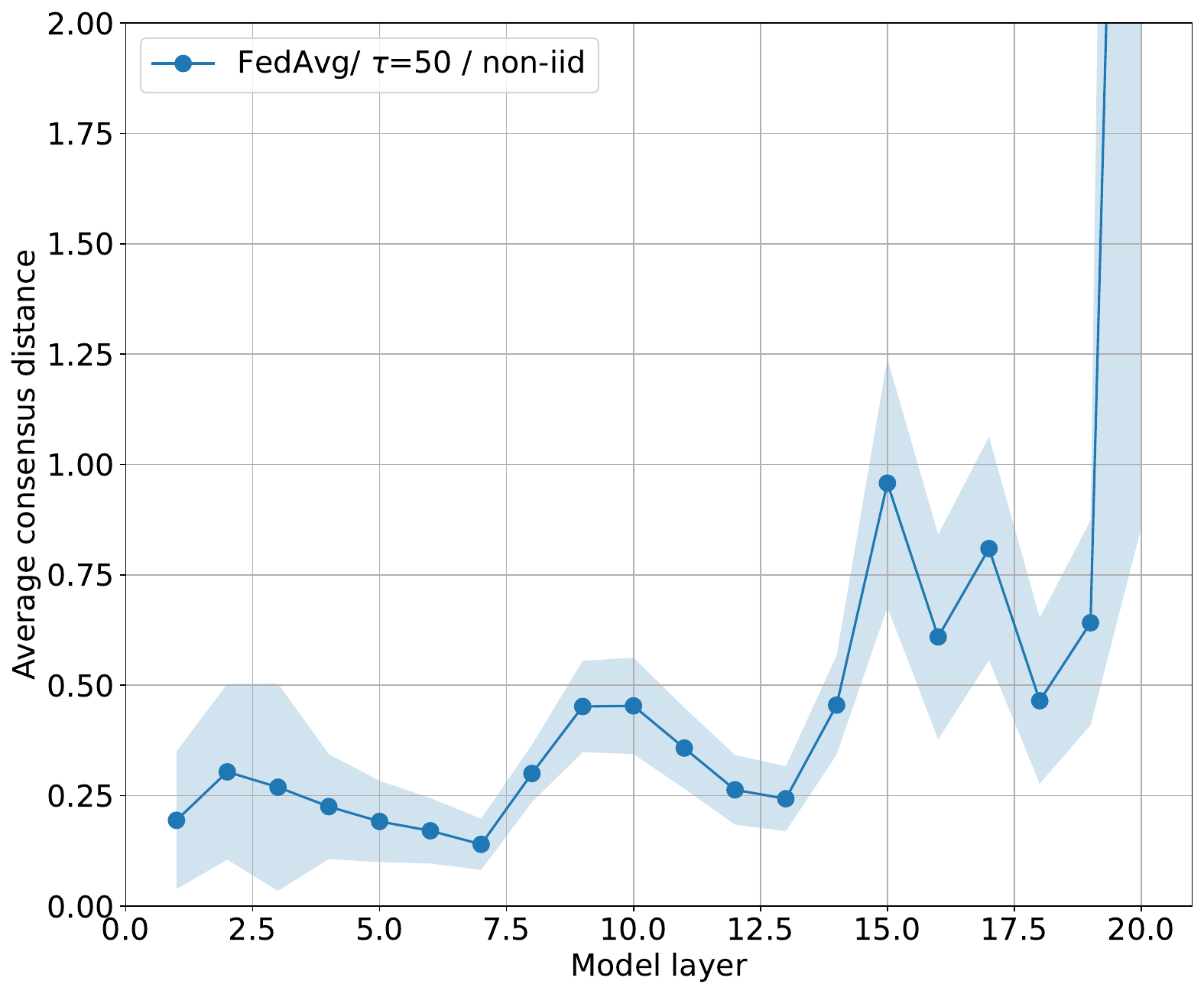} 
\vspace{-7pt}
\caption{Average consensus distance over time for different layers, measured while training a ResNet-$20$ by FedAvg on CIFAR-$10$ with $5$ clients with non-iid data distribution over clients ($2$ classes per client). 
The early layers responsible for extracting representations exhibit lower levels of consensus distance.}
\vspace{-15pt}
\label{consensus}
\end{figure}

The above example indicates that initial layers show higher similarity, so they can be aggregated less frequently.
Additionally, several empirical studies \citep{reddi2021adaptive,Yu2020SalvagingFL} show that federated learning with multiple local updates per round learns a generalizable representation and is unexpectedly successful in non-iid settings. These studies encourage us to delve deeper into investigating how local updates and model aggregation frequency affect the model's representation extractor in terms of its generalization. 

In this paper, based on our improved generalization bound analysis and our representation learning interpretation of this analysis, we showed for the first time that employing different frequencies of aggregation, \ie the number of local updates (local SGDs), for the representation extractor (typically corresponding to initial layers) and the head (final label prediction layers), leads to the creation of more generalizable models particularly in non-iid scenarios.




\textit{\underline{Contribution III:} Design of FedALS.} We design a novel Federated Learning with Adaptive Local Steps (FedALS) algorithm based on our generalization error bound analysis and its representation learning interpretation.
FedALS employs varying aggregation frequencies for different parts of the model.

\textit{\underline{Contribution IV:} Evaluation.}  We evaluate the performance of FedALS using deep neural network model ResNet-$20$ for CIFAR-$10$, CIFAR-$100$ \citep{cifar10}, SVHN \citep{Netzer2011ReadingDI}, and MNIST \citep{mnist} datasets. We also estimate the impact of FedALS on large language models (LLMs) in fine-tuning OPT-$125$M \citep{OPT} on the Multi-Genre Natural Language Inference (MultiNLI) corpus \citep{multi-nli}. We consider both iid and non-iid data distributions.
Experimental results confirm that FedALS outperforms the baselines in terms of accuracy in non-iid setups while also saving on communication costs across all setups.

\section{Related Work}

There has been increasing interest in distributed learning recently, largely driven by Federated Learning. Several studies have highlighted that these algorithms achieve convergence to a global optimum or a stationary point of the overall objective, particularly in convex or non-convex scenarios \citep{Stich2019TheEF,Stich2018LocalSC,gholami2023digest,Lian2018Async,Kairouz2019openFL}.
However, it is widely accepted that communication cost is the major bottleneck for these techniques in large-scale optimization applications \citep{Konecn2016FederatedLS,Lin2017DeepGC}.
To tackle this issue, two primary strategies are put forth: the utilization of mini-batch parallel SGD, and the adoption of Local SGD. These approaches aim to enhance the equilibrium between computation and communication.
\citet{Woodworth2020IsLS,Woodworth2020MinibatchVL} attempt to theoretically capture the distinction to comprehend under what circumstances Local SGD outperforms minibatch SGD.

Local SGD appears to be more intuitive compared to minibatch SGD, as it ensures progress towards the optimum even in cases where workers are not communicating and employing a mini-batch size that is too large may lead to a decrease in performance \citep{Lin2017DeepGC}.
However, due to the fact that individual gradients for each worker are computed at distinct instances, this technique brings about residual errors. As a result, a compromise arises between reducing communication rounds and introducing supplementary errors into the gradient estimations.
This becomes increasingly significant when data is unevenly distributed across nodes.
There are several decentralized algorithms that have been shown to mitigate heterogeneity \citep{Karimireddy2019SCAFFOLDSC,Liu2023DecentralizedGT} . One prominent example is the Stochastic Controlled Averaging algorithm (SCAFFOLD) \citep{Karimireddy2019SCAFFOLDSC}, which addresses the node drift caused by non-iid characteristics of data distribution. They establish the notion that SCAFFOLD demonstrates a convergence rate at least equivalent to that of SGD, ensuring convergence even when dealing with highly non-iid datasets.

However, despite these factors, multiple investigations \citep{reddi2021adaptive,Yu2020SalvagingFL,Lin2020Don't,gu2023why}, have noted that the model trained using FedAvg and incorporating multiple Local SGD per round exhibits unexpected effectiveness when subsequently fine-tuned for individual clients in non-iid FL setting. This implies that the utilization of FedAvg with several local updates proves effective in acquiring a valuable data representation, which can later be employed on each node for downstream tasks.
Following this line of reasoning, our justification will be based on the argument that the Local SGD component of FedAvg contributes to improving performance in heterogeneous scenarios by facilitating the acquisition of models with enhanced generalizability.

An essential characteristic of machine learning systems is their capacity to extend their performance to novel and unseen data. This capacity, referred to as generalization, can be expressed within the framework of statistical learning theory. There has been a line of research to characterize generalization bound in FL \cite{wang2023generalization,agnistic}. More recently \citet{Barnes2022ImprovedIG,sun2023understanding,9154277} considered this problem and gave upper bounds on the expected generalization error for FL in iid setting in terms of the local generalizations of clients. This work demonstrates an improved dependence of $\frac{1}{K}$ on the number of nodes. Motivated by this work, we build our research foundation by analyzing generalization in a non-iid setting and use the derived insights to introduce FedALS, aiming to enhance conventional machine learning generalization.

Note that FedALS differs from exploiting shared representations for personalized federated learning, as discussed in \citet{Collins2021ExploitingSR}. In FedALS, we do not employ different models on different clients, as seen in personalized learning. Our proof demonstrates that increasing the number of local steps enhances generalization in the standard (single-model) federated learning setting.
\section{Background and Problem Statement \label{sec:back}}
\subsection{Preliminaries and Notation}

We consider that we have $K$ clients/nodes in our system, and each node has its own portion of the dataset. For example, node $k$ has a local dataset $\Sk = \{\Z{k}{1},...,\Z{k}{n_k}\}$, where $\Z{k}{i} = (\X{k}{i},\Y{k}{i})$ is drawn from a distribution $\mathcal{D}_k$ over $\mathcal{X} \times \mathcal{Y}$ , where $\mathcal{X}$ is the input space
and $\mathcal{Y}$ is the label space.
We consider $\mathcal{X} \subseteq \mathbb{R}^d$  and $\mathcal{Y} \subseteq \mathbb{R}$ .
The size of the local dataset at node $k$ is $n_k$. The dataset across all nodes is defined as $\Sk[] = \{\Sk[1], ..., \Sk[K]\}$. Data distribution across the nodes could be independent and identically distributed (iid) or non-iid. In iid setting, we assume that $\mathcal{D}_1 = ... = \mathcal{D}_K = \mathcal{D}$ holds. On the other hand, non-iid setting covers all possible distributions and cases, where $\mathcal{D}_1 = ... = \mathcal{D}_K = \mathcal{D}$ does not hold. 

We assume that $\Mk[{\Thetak[]}] = \mathcal{A}(\Sk[])$ represents the output of a possibly stochastic function denoted as $\mathcal{A}(\Sk[])$, where $\Mk[{\Thetak[]}]: \mathcal{X} \rightarrow \mathcal{Y}$ represents the learned model parameterized by $\Thetak[]$. We consider a real-valued loss function denoted as $l(\mk[{\thetak[]}], \zonly)$, which assesses the model $\mk[{\thetak[]}]$ based on a sample $\zonly$. 

\subsection{Generalization Error}
We first define an empirical risk on dataset $\sk[]$ as
\begin{align}
    \hspace{-6pt}R_{\sk[]}(\mk[{\thetak[]}]) \hspace{-2pt}=  \hspace{-2pt} \EX_{k \sim \mathcal{K}} \hspace{-2pt} R_{\sk}\hspace{-2pt}(\mk[{\thetak[]}])\hspace{-2pt} = \hspace{-2pt}\EX_{k \sim \mathcal{K}}\hspace{-2pt}\frac{1}{n_k} \hspace{-2pt} \sum_{i=1}^{n_k} l(\mk[{\thetak[]}],\z{k}{i}),\hspace{-2pt}
\end{align}

where $\mathcal{K}$ is an arbitrary distribution over nodes to weight different local risk contributions in the global risk. Specifically, $\mathcal{K}(k)$ represents the contribution of node $k$'s loss in the global loss. In the most conventional case, it is usually assumed to be uniform across all nodes, i.e., $\mathcal{K}(k)=\frac{1}{K}$ for all $k$.
%
%
%
$R_{\sk}(\mk[{\thetak[]}])$ is the empirical risk for model $\mk[{\thetak[]}]$ on local dataset $\sk$. 
 We further define  a population risk for model $\mk[{\thetak[]}]$ as
\begin{align}
     R(\mk[{\thetak[]}]) &= \EX_{k \sim \mathcal{K}} R_k(\mk[{\thetak[]}]) = \EX_{k \sim \mathcal{K},\Zonly \sim \mathcal{D}_k} l(\mk[{\thetak[]}],\Zonly),
 \end{align} where $R_k(\mk[{\thetak[]}])$ is the population risk on node $k$'s data distribution.
 
 Now, we can define the generalization error for dataset $\sk[]$ and function $\mathcal{A}(\sk[])$ as
 \begin{align}
     \Delta _{\mathcal{A}}(\sk[]) = R(\mathcal{A}(\sk[])) - R_{\sk[]}(\mathcal{A}(\sk[])).
 \end{align}
 The expected generalization error is expressed as $
     \EX_{\Sk[]} \Delta _{\mathcal{A}}(\Sk[]) $,
where $\EX_{\Sk[]}[\cdot] = \EX_{\{\Sk[k] \sim \mathcal{D}_k^{n_k}\}_{k=1}^K}[\cdot]$ is used for the sake of notation convenience.





\subsection{Federated Learning}
We consider a federated learning scenario with $K$ nodes/clients and a centralized parameter server. The nodes update their localized models to minimize their empirical risk $R_{\sk}(\mk[{\thetak[]}])$  on local dataset $\sk$, while the parameter server aggregates the local models to minimize the empirical risk $R_{\sk[]}(\mk[{\thetak[]}])$. 
%
%
Due to connectivity and privacy constraints, the clients do not exchange their data with each other. 
One of the most widely used federated learning algorithms is FedAvg \citep{McMahan2016CommunicationEfficientLO}, which we explain in detail next. 


At round $r$ of FedAvg, each node $k$ trains its model $\mk[{\thetak[{k,r}]}] = \mathcal{A}_{k,r}(\sk)$ locally using the function/algorithm $\mathcal{A}_{k,r}$. The local models $\mk[{\thetak[{k,r}]}]$ are transmitted to the central parameter server, which merges the received local models to aggregated model parameters $\hat{{\thetak[]}}_{r+1} = \hat{\mathcal{A}}(\thetak[{1,r}], ..., \thetak[{K,r}])$, where $\hat{\mathcal{A}}$ is the aggregation function. 
%
%
In FedAvg, the aggregation function calculates an average, so the aggregated model is expressed as \vspace{-5pt}
\begin{align}
\hat{{\thetak[]}}_{r+1} = \EX_{k \sim \mathcal{K}} \thetak[{k,r}]. \label{fedavg}
\end{align}

Subsequently, the aggregated model is transmitted to all nodes. This process continues for $R$ rounds. The final model after $R$ rounds of FedAvg is $\mathcal{A}(S)$.

The local models are usually trained using stochastic gradient descent (SGD) at each node. To reduce the communication cost needed between the nodes and the parameter server, each node executes multiple SGD steps using its local data after receiving an aggregated model from the parameter server. To be precise, 
%
%
we have the aggregated model parameters at round $r$ as $\hat{{\thetak[]}}_r$. Specifically, upon receiving $\hat{{\thetak[]}}_r$, node $k$ computes
\begin{align}
    \thetak[{k,r,t+1}] = \thetak[{k,r,t}] - \frac{\eta}{|\mathcal{B}_{k,r,t}|} \sum_{i \in \mathcal{B}_{k,r,t}}\nabla l(\mk[{\thetak[{k,r,t}]}],\z{k}{i})\label{local_sgd} 
\end{align}
for $t=0,\dots,\tau - 1$, where $\tau$ is the number of local SGD steps, 
$\thetak[{k,r,0}]$ is defined as $\thetak[{k,r,0}] = \hat{{\thetak[]}}_r$,
$\eta$ is the learning rate, $\mathcal{B}_{k,r,t}$ is the batch of samples used in local step $t$ of round $r$ in node $k$, $\nabla$ is the gradient, and $|\cdot|$ shows the size of a set.
Upon completing the local steps in round $r$, each node transmits $\thetak[{k,r}] = \thetak[{k,r,\tau}]$ to the parameter server to calculate $\hat{{\thetak[]}}_{r+1}$ as in (\ref{fedavg}).

\subsection{Representation Learning}
Our approach for analyzing the generalization error bounds for federated learning, by specifically focusing on FedAvg, uses representation learning, which we explain next.

We consider a class of models that consist of a representation extractor (\eg ResNet). Let $\thetak[]$ be the model $\mk[{\thetak[]}]$'s parameters. We can decompose $\thetak[]$ into two sets: $\phik[]$ containing the representation extractor's parameters and $\hk[]$ containing the head parameters, \ie $\thetak[] = [\phik[],\hk[]]$. $\mk[{\phik[]}]$ is a function that maps from the original input space to some feature space, \ie $\mk[{\phik[]}]: \mathbb{R}^d \rightarrow \mathbb{R}^{d'}$, where usually  $d' \ll d$. The function $\mk[{\hk[]}]$ performs a low complexity mapping from the representation space to the label space, which can be expressed as $\mk[{\hk[]}]: \mathbb{R}^{d'} \rightarrow \mathbb{R}$.


For any $\xonly \in \mathcal{X}$, the output of the model is $\mk[{\thetak[]}](\xonly) = (\mk[{\phik[]}] \circ \mk[{\hk[]}])(\xonly) = \mk[{\hk[]}](\mk[{\phik[]}](\xonly))$. For instance, if $\mk[{\thetak[]}]$ is a neural network, $\mk[{\phik[]}]$ represents several initial layers of the network, which are typically designed to extract
meaningful representations from the neural network’s input. On the other hand, $\mk[{\hk[]}]$ denotes the final few layers that lead to the network's output.
\section{Improved Generalization Bounds }

In this section, we derive generalization bounds for FedAvg based on clients' local generalization performances in a general non-iid setting for the first time in the literature. First, we start with one-round FedAvg and analyze its generalization bound. Then, we extend our analysis to $R-$round FedAvg. 

\subsection{One-Round Generalization Bound}

In the following theorem, we determine the generalization bound for one round of FedAvg.
\begin{theorem}\label{one-round_lemma}
    Let $l(\Mk[{\Thetak[]}],\Zonly)$ be $\mu$-strongly convex and $L$-smooth in $\Mk[{\Thetak[]}]$. $\Mk[{\Thetak[]}_k]=\mathcal{A}_k(\Sk)$ represents the model obtained from Empirical Risk Minimization (ERM) algorithm on local dataset $\Sk$, \ie $\Mk[{\Thetak[]}_k]= \arg \min_{\Mk[]} \sum_{i=1}^{n_k} l(\Mk[],\z{k}{i})$, and  $\Mk[\hat{{\Thetak[]}}]=\mathcal{A}(\Sk[])$ is the model after one round of FedAvg ($\hat{{\Thetak[]}} = \EX_{k \sim \mathcal{K}} \Thetak$).
    Then, the expected generalization error is
    \begin{align}
    \EX&_{\Sk[]}\Delta_{\mathcal{A}}(\Sk[]) \leq \EX_{k \sim \mathcal{K}}\bigg[\frac{L\mathcal{K}(k)^2}{\mu}\underbrace{\EX_{\Sk[k]} \Delta_{\mathcal{A}_k}(\Sk)}_\text{Expected local generalization} \label{one_round-main}
    + 2\sqrt{\frac{L}{\mu}}\mathcal{K}(k)\bigg(\underbrace{\EX_{\Sk[]} \delta_{k,\mathcal{A}}(\Sk[])}_\text{Expected non-iidness} \underbrace{\EX_{\Sk[k]}\Delta_{\mathcal{A}_k}(\Sk)}_\text{Expected local generalization}\bigg)^{\frac{1}{2}}\bigg],
    \end{align}
    where $\delta_{k,\mathcal{A}}(\Sk[]) =  R_{\sk}(\mathcal{A}(\Sk[])) - R_{\sk}(\mathcal{A}_k(\Sk)) $ indicates the level of non-iidness at client $k$ for function $\mathcal{A}$ on dataset $\Sk[]$.
\end{theorem}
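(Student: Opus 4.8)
The plan is to reduce the $K$-client bound to a one-client-at-a-time stability estimate through a ghost-sample symmetrization, exploiting the one structural feature that makes this setting special: the one-round aggregate depends on client $k$'s data $\Sk$ only through the \emph{diluted} summand $\mathcal{K}(k)\,\mathcal{A}_k(\Sk)$. Write $M_k := \mathcal{A}_k(\Sk)$ for the local ERM and $\hat{M} := \mathcal{A}(\Sk[]) = \EX_{k\sim\mathcal{K}}M_k$ for the FedAvg model. First I would use that $R$ and $R_{\Sk[]}$ are both $\mathcal{K}$-averages of the per-client risks to split $\EX_{\Sk[]}\Delta_{\mathcal{A}}(\Sk[]) = \EX_{k\sim\mathcal{K}}\,\EX_{\Sk[]}\!\big[R_k(\hat{M}) - R_{\Sk}(\hat{M})\big]$ and bound each summand separately. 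For fixed $k$, introduce an independent copy $\Sk'\sim\mathcal{D}_k^{n_k}$ of $\Sk$, set $\tilde{M}_k := \mathcal{A}_k(\Sk')$, and let $\tilde{M}$ be the aggregate obtained by substituting $\tilde{M}_k$ for $M_k$ while leaving all other clients untouched. Since $\tilde{M}$ is independent of $\Sk$ while $(\Sk,\Sk')$ are exchangeable, a relabelling $\Sk\leftrightarrow\Sk'$ gives $\EX[R_k(\hat{M})] = \EX[R_{\Sk}(\tilde{M})]$, hence $\EX_{\Sk[]}\big[R_k(\hat{M}) - R_{\Sk}(\hat{M})\big] = \EX\big[R_{\Sk}(\tilde{M}) - R_{\Sk}(\hat{M})\big]$: the population risk has been traded for an empirical one, and we are left comparing $R_{\Sk}$ at two nearby points.

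The key observation is $\tilde{M} - \hat{M} = \mathcal{K}(k)\,(\tilde{M}_k - M_k)$, so the displacement is scaled down by $\mathcal{K}(k)$. Since $R_{\Sk}$ is $\mu$-strongly convex and $L$-smooth (an average of such losses) with minimizer $M_k$, expanding it around $\hat{M}$ yields
\begin{equation*}
R_{\Sk}(\tilde{M}) - R_{\Sk}(\hat{M}) \;\le\; \mathcal{K}(k)\,\langle \nabla R_{\Sk}(\hat{M}),\, \tilde{M}_k - M_k\rangle \;+\; \tfrac{L}{2}\,\mathcal{K}(k)^2\,\| \tilde{M}_k - M_k \|^2 .
\end{equation*}
The quadratic term already carries the $\mathcal{K}(k)^2$ of the first term of the claimed bound; the cross term is where the non-iidness quantity $\delta_{k,\mathcal{A}}$ will enter through Cauchy--Schwarz.

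Two "domination" inequalities then close the estimate. First, an ERM-stability bound linking $\tilde{M}_k - M_k$ to the \emph{local} generalization: since $M_k$ and $\tilde{M}_k$ minimize the $\mu$-strongly convex empirical risks $R_{\Sk}$ and $R_{\Sk'}$, adding the two strong-convexity inequalities, taking expectations, and relabelling to recognise the resulting cross terms as local generalization errors gives $\EX\| \tilde{M}_k - M_k \|^2 \le \tfrac{2}{\mu}\,\EX_{\Sk}\Delta_{\mathcal{A}_k}(\Sk)$. Second, a gradient-domination bound linking $\nabla R_{\Sk}(\hat{M})$ to the non-iidness: because $M_k$ minimizes the $L$-smooth $R_{\Sk}$, one has $\| \nabla R_{\Sk}(\hat{M}) \|^2 \le 2L\big(R_{\Sk}(\hat{M}) - R_{\Sk}(M_k)\big) = 2L\,\delta_{k,\mathcal{A}}(\Sk[])$. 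Substituting the first bound into the quadratic term produces $\tfrac{L\,\mathcal{K}(k)^2}{\mu}\,\EX_{\Sk}\Delta_{\mathcal{A}_k}(\Sk)$; applying Cauchy--Schwarz to the cross term (both over the inner product and over the expectation, using $\EX\!\sqrt{XY}\le(\EX X\,\EX Y)^{1/2}$) and then both domination bounds gives $\mathcal{K}(k)\big(2L\,\EX\delta_{k,\mathcal{A}}\big)^{1/2}\big(\tfrac{2}{\mu}\EX\Delta_{\mathcal{A}_k}\big)^{1/2} = 2\sqrt{L/\mu}\,\mathcal{K}(k)\big(\EX\delta_{k,\mathcal{A}}\cdot\EX\Delta_{\mathcal{A}_k}\big)^{1/2}$. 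Taking $\EX_{k\sim\mathcal{K}}$ of the sum of the two pieces reproduces the stated bound.

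The main obstacle, I expect, is getting the setup right rather than grinding the estimates. The crucial choices are (i) using a symmetrization that resamples \emph{only} client $k$'s dataset — this is exactly what yields the $\mathcal{K}(k)^2$ scaling and, more importantly, prevents a spurious stand-alone $\delta_{k,\mathcal{A}}$ term, which a naive route (bounding $\|\hat{M}-M_k\|^2\le\tfrac{2}{\mu}\delta_{k,\mathcal{A}}$ and expanding $R_k$ directly) would generate; and (ii) pairing this with the two \emph{correct} domination inequalities, so that the cross term comes out as the geometric mean $\big(\delta_{k,\mathcal{A}}\cdot\Delta_{\mathcal{A}_k}\big)^{1/2}$ with the sharp constant $\sqrt{L/\mu}$ rather than a looser $L/\mu$. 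Everything else — the relabelling identities, the strong-convexity/smoothness expansions, and pulling the expectation inside the square root — is routine.
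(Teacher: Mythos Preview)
Your proposal is correct and follows essentially the same route as the paper's proof: a leave-one-out/ghost-sample symmetrization that resamples only client $k$'s dataset, the $L$-smoothness expansion yielding an inner-product term plus $\tfrac{L}{2}\mathcal{K}(k)^2\|\mathcal{A}_k(\Sk)-\mathcal{A}_k(\Sk')\|^2$, Cauchy--Schwarz on the inner product, the strong-convexity ERM-stability bound $\EX\|\mathcal{A}_k(\Sk)-\mathcal{A}_k(\Sk')\|^2\le\tfrac{2}{\mu}\EX\Delta_{\mathcal{A}_k}$, and the gradient-domination bound $\|\nabla R_{\Sk}(\hat M)\|^2\le 2L\,\delta_{k,\mathcal{A}}$. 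The only cosmetic difference is that the paper evaluates on the ghost set $\Sk'$ and expands around $\mathcal{A}(\Sk[]^{(k)})$, whereas you evaluate on $\Sk$ and expand around $\hat M$; the two are interchangeable under the $\Sk\leftrightarrow\Sk'$ relabelling, and your version has the slight advantage that $\delta_{k,\mathcal{A}}(\Sk[])$ appears directly without a final relabelling step.
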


\emph{Proof:} The proof of Theorem \ref{one-round_lemma} is provided in Appendix \ref{one-round-proof-append} of the supplementary materials. \hfill $\Box$

\begin{remark}
    We note that this theorem and its proof assume that all clients participate in learning. The other scenario is that not all clients participate in the learning procedure. We can consider the following two cases when not all clients participate in the learning procedure.

\noindent \emph{Case I:} Sampling $\hat{K}$ clients with replacement based on distribution $\mathcal{K}$, followed by averaging the local models with equal weights.
    
\noindent \emph{Case II:} Sampling $\hat{K}$ clients without replacement uniformly at random, then performing weighted averaging of local models. Here, the weight of client $k$ is rescaled to $\frac{\mathcal{K}(k)K}{\hat{K}}$.

The generalization error results in these cases are affected by substituting \(\frac{1}{\hat{K}}\) and \(\frac{\mathcal{K}(k)K}{\hat{K}}\) instead of \(\mathcal{K}(k)\) in (\ref{one_round-main}) for cases I and II, respectively. The detailed proof is provided in Appendix \ref{partial-ap}.
\end{remark}




\textbf{Discussion.}
Note that there are two terms in the generalization error bound: (i) local generalization of each client that shows more generalizable local models lead to a better generalization of the aggregated model, (ii) non-iidness of each client which deteriorates generalization.
Theorem \ref{one-round_lemma} reveals a factor of $\mathcal{K}(k)^2$ for the first term, which is the sole term in the iid setting. For example, in the uniform case (\(\mathcal{K}(k) = \frac{1}{K}\)), we will observe an improvement with a factor of $\frac{1}{K^2}$ for the iid case. This represents an enhancement compared to the state of the art \citep{Barnes2022ImprovedIG,sun2023understanding,9154277}, which only demonstrates a factor of $\frac{1}{K}$.
As a result, after the averaging process carried out by the central parameter server, the generalization error is reduced by a factor of $\mathcal{K}(k)^2$ in iid case.

On the other hand, we do not see a similar behavior in non-iid case. In other words, the expected generalization error bound does not necessarily decrease with averaging. These results show why FedAvg works well in iid setup, but not necessarily in non-iid setup. This observation motivates us to design a new federated learning approach for non-iid setup. The question in this context is what should be the new federated learning design. To answer this question, we analyze $R-$round generalization bound in the next section.

\subsection{$R-$Round Generalization Bound} \label{R-round}
In this setup, after $R$ rounds, there is a sequence of weights $\{\hat{{\thetak[]}}_r\}_{r=1}^R$ and the final model is $\hat{{\thetak[]}}_R$. 
We consider that at round $r$, each node constructs its updated model as in (\ref{local_sgd}) by taking $\tau$ gradient steps starting from $\hat{{\thetak[]}}_r$ with respect to $\tau$ random mini-batches $Z_{k,r} = \bigcup \{\mathcal{B}_{k,r,t}\}_{t=0}^{\tau-1}$ drawn from the local dataset $\Sk$.
For this type of iterative algorithm, we consider the following averaged empirical risk 
\begin{align}
    \frac{1}{R} \sum_{r=1}^R \EX_{k \sim \mathcal{K}} \bigg[\frac{1}{|Z_{k,r}|}\sum_{i \in Z_{k,r}} l(\Mk[\hat{\thetak[]}_r],\z{k}{i})\bigg].
\end{align}
The corresponding generalization error, $\Delta_{FedAvg}(\Sk[])$, is
\begin{align}
    \hspace{-2pt} \frac{1}{R} \hspace{-2pt} \sum_{r=1}^R \hspace{-1pt}\EX_{k \sim \mathcal{K}} \hspace{-3pt}\bigg[\hspace{-2pt}\EX_{\Zonly \sim \mathcal{D}_k} \hspace{-3pt}l(\Mk[\hat{\thetak[]}_r],\Zonly) \label{fedAvg_g} \hspace{-2pt} - \hspace{-2pt} \frac{1}{|Z_{k,r}|}\hspace{-3pt}\sum_{i \in Z_{k,r}} \hspace{-4pt}l(\Mk[\hat{\thetak[]}_r],\z{k}{i})\hspace{-1pt}\bigg].
\end{align}
Note that the expression in (\ref{fedAvg_g}) differs slightly from the end-to-end generalization error that would be obtained by considering the final model $\Mk[\hat{\thetak[]}_R]$ and the entire dataset $\Sk[]$. More specifically, (\ref{fedAvg_g}) is an average of the generalization errors measured at each round, similar to \cite{Barnes2022ImprovedIG}).
We anticipate that the generalization error diminishes with the increasing number of data samples, so this generalization error definition yields to a more cautious upper limit and serves as a sensible measure. The next theorem characterizes the expected generalization error bounds for $R-$Round FedAvg in iid and non-iid settings. 


\begin{theorem}\label{mani_th}
Let $l(\Mk[{\Thetak[]}],\Zonly)$ be $\mu$-strongly convex and $L$-smooth in $\Mk[{\Thetak[]}]$.
Local models at round $r$ are calculated by doing $\tau$ local gradient descent steps (\ref{local_sgd}), and the local gradient variance is bounded by $\sigma^2$, \ie $\EX_{\Zonly \sim \mathcal{D}_k} \norm{\nabla l(\Mk[{\Thetak[]}],\Zonly) - \EX_{\Zonly \sim \mathcal{D}_k}\nabla l(\Mk[{\Thetak[]}],\Zonly)}\leq \sigma^2$.
The aggregated model at round $r$, $\Mk[\hat{{\Thetak[]}}_r]$, is obtained by performing FedAvg, and the data points used in round $r$ (\ie $Z_{k,r}$) are sampled without replacement.
Then the average generalization error bound is 
\begin{align}
    \EX_{\Sk[]} \Delta_{FedAvg}(\Sk[]) \hspace{-2pt}\leq \frac{1}{R} \sum_{r=1}^R &\EX_{k \sim \mathcal{K}}\bigg[\frac{2L\mathcal{K}(k)^2}{\mu} A  \label{main}+ \sqrt{\frac{8L}{\mu}}\mathcal{K}(k)(AB)^{\frac{1}{2}}\bigg],
\end{align}
where $A = \Tilde{O}\bigg(\sqrt{\frac{\mathcal{C}(\Mk[{\Thetak[]}])}{|Z_{k,r}|}} + \frac{\sigma^2}{\mu \tau} + \frac{L}{\mu}\bigg)$, $B= \Tilde{O}\bigg( \EX_{\{Z_{k,r}\}_{k=1}^K} \delta_{k,\mathcal{A}}(\{Z_{k,r}\}_{k=1}^K) + \frac{\sigma^2}{\mu \tau} + \frac{L}{\mu}\bigg)$, $\Tilde{O}$ hides constants and poly-logarithmic factors,  and $\mathcal{C}(\Mk[{\Thetak[]}])$  shows the complexity of the model class of $\Mk[{\Thetak[]}]$.
\end{theorem}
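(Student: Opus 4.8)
\textbf{Proof plan for Theorem \ref{mani_th}.}

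The plan is to reduce the $R$-round bound to a per-round application of the one-round machinery in Theorem \ref{one-round_lemma}, with two modifications: (i) the local model $\Mk[{\Thetak[]}_k]$ at round $r$ is no longer the exact ERM minimizer but the output of $\tau$ SGD steps started from $\hat{{\Thetak[]}}_r$, and (ii) the empirical risk is measured on the subsample $Z_{k,r}$ rather than the whole local set $\Sk$. Fix a round $r$. First I would write the round-$r$ generalization error as a telescoping-type decomposition mirroring the proof of Theorem \ref{one-round_lemma}: the gap between the population risk of $\Mk[\hat{{\Thetak[]}}_r]$ and its empirical risk on $\{Z_{k,r}\}$ is controlled by (a) the ``local generalization'' of the $\tau$-step SGD iterate on $Z_{k,r}$, weighted by $\mathcal{K}(k)^2 L/\mu$, and (b) a cross term $2\sqrt{L/\mu}\,\mathcal{K}(k)$ times the geometric mean of that local generalization and the non-iidness quantity $\delta_{k,\mathcal{A}}$, exactly as in \eqref{one_round-main}. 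The $\mu$-strong-convexity / $L$-smoothness structure is what lets the distance $\norm{\hat{{\Thetak[]}}_r - \Thetak[k,r]}$ in parameter space be turned into a risk gap; this part is essentially a rerun of the one-round argument, so the real work is in bounding the two ``local'' ingredients $A$ and $B$.

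For the term $A$ — the expected local generalization of the $\tau$-step SGD output evaluated on the size-$|Z_{k,r}|$ sample — I would split it into three contributions. The statistical part, $\sqrt{\mathcal{C}(\Mk[{\Thetak[]}])/|Z_{k,r}|}$, comes from a uniform-convergence / complexity bound over the model class (this is where $\mathcal{C}(\cdot)$ and the $\tilde O$ poly-log factors enter, e.g.\ via a covering-number or Rademacher argument, together with the without-replacement sampling of $Z_{k,r}$). The optimization part, $\sigma^2/(\mu\tau)$, is the standard bound on how far $\tau$ steps of SGD with the stated gradient-variance bound $\sigma^2$ leave the iterate from the empirical minimizer of the risk on $Z_{k,r}$ under $\mu$-strong convexity — i.e.\ the iterate-to-ERM gap decays like $1/(\mu\tau)$. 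The residual $L/\mu$ term is the irreducible conditioning factor inherited from converting between parameter distance and function value (the same $L/\mu$ that multiplies the one-round bound). Summing these and absorbing constants/logs into $\tilde O$ gives the stated form of $A$. The term $B$ is handled the same way except that its ``statistical'' piece is replaced by the non-iidness measure $\EX_{\{Z_{k,r}\}}\delta_{k,\mathcal{A}}(\{Z_{k,r}\})$, since $B$ plays the role of the non-iidness factor (the analogue of $\EX_{\Sk[]}\delta_{k,\mathcal{A}}(\Sk[])$ in \eqref{one_round-main}) perturbed by the same SGD-optimization and conditioning errors.

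Having bounded each round's error by $\frac{2L\mathcal{K}(k)^2}{\mu}A + \sqrt{\frac{8L}{\mu}}\mathcal{K}(k)(AB)^{1/2}$ in expectation over $\Sk[]$, I would then average over $r=1,\dots,R$ and take $\EX_{k\sim\mathcal{K}}$, which yields exactly \eqref{main}; the extra factor of $2$ and the $\sqrt 8$ versus the $1$ and $2$ in the one-round theorem come from having to pay once for the ERM-to-SGD substitution inside $A$ (doubling the local-generalization contribution) and once more inside the cross term. The main obstacle I anticipate is the control of the ``local generalization of an SGD iterate'' cleanly — i.e.\ decoupling the randomness of the mini-batch draws $Z_{k,r}$ from the model-class complexity so that the statistical term is genuinely $\sqrt{\mathcal{C}/|Z_{k,r}|}$ and not something that degrades with $\tau$ — and making the without-replacement sampling compatible with the strong-convexity-based stability argument; the constants and poly-log factors are exactly what the $\tilde O$ is there to hide, so I would not chase them, but the structural claim that the SGD optimization error enters only additively as $\sigma^2/(\mu\tau)$, uniformly in the round index, is the crux and will need the smoothness assumption together with a careful recursion on $\EX\norm{\Thetak[k,r,t] - \Thetak[k,r]^\star}$.
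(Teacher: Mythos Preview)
Your proposal is correct and follows essentially the same route as the paper: the proof in the appendix first establishes an SGD-variant of the one-round theorem in which the ERM is replaced by the $\tau$-step iterate, picking up an additive optimization-error term $\epsilon_k(\Sk)=R_{\sk}(\mathcal{A}_k(\Sk))-R_{\sk}(\mathcal{A}^*(\Sk))$ (this is the source of the extra factors of $2$ and $\sqrt{8}$ you identified), then applies that bound at each round to $\{Z_{k,r}\}$, bounds $\Delta_{\mathcal{A}_k}(Z_{k,r})$ by a PAC/complexity argument and $\epsilon_k(Z_{k,r})$ by the $\sigma^2/(\mu\tau)+L/\mu$ SGD convergence rate from the literature, and averages over $r$. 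Your description of the decomposition of $A$ and $B$ and of the anticipated technical friction matches the paper's (rather terse) treatment.
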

\emph{Proof:} The proof of Theorem \ref{mani_th} is provided in Appendix \ref{ap-b} of the supplementary materials. \hfill $\Box$

The generalization error bound in  (\ref{main}) depends on the following parameters: (i) number of rounds; $R$, (ii) number of samples used in every round; $|Z_{k,r}|$, (iii) the complexity of the model class; $\mathcal{C}(\Mk[{\Thetak[]}])$, non-iidness; $\delta_{k,\mathcal{A}}(\{Z_{k,r}\}_{k=1}^K)$, number of local steps in each round; $\tau$.
We note that (\ref{main}), also depends on $K$ (more specifically $\mathcal{K}$), but this dependence is similar to the discussion we had for one-round generalization, so we skip it here. 

\section{Representation Learning Interpretation of R-Round Generalization Bound}

The complexity of the model class and the number of samples and local steps used in every round are crucial to minimizing the generalization error bound especially in non-iid case in (\ref{main}) where the generalization error bound is loose in comparison to iid setup.

Some common complexity measures in the literature include the number of parameters (classical VC Dimension \citep{vcdim}), parameter norms (\eg $l_1,l_2,$ spectral) \citep{parameter_size}, or other potential complexity measures (Lipschitzness, Sharpness, \ldots) \citep{Neyshabur2017ExploringGI,ComputingNonvacuousGeneralization,Nagarajan2019UniformCM,Wei2019DatadependentSC,Norton2019DiametricalRM,foret2021sharpnessaware}. 
Independent from a specific complexity measure, a model in representation learning can be divided into two parts: (i) $\mk[{\phik[]}]$, which is the representation extractor, and (ii) $\mk[{\hk[]}]$, a simple head which maps the representation to an output. The complexities of these parts follow $\mathcal{C}(\Mk[{\hk[]}]) \ll \mathcal{C}(\Mk[{\phik[]}])$. 

Our \underline{key intuition} in this paper is that we can \emph{reduce the aggregation frequency of $\Mk[{\phik[]}]$, which leads to a larger  $\tau$ and $|Z_{k,r}|$, hence smaller generalization error bound according to (\ref{main}).}\footnote{We do not reduce the aggregation frequency of $\mk[{\hk[]}]$ as its complexity, so its contribution to generalization error, is small.} 

 As seen, there is a nice trade-off between aggregation frequency of $\Mk[{\phik[]}]$ and population risk. In the next section, we design our Federated Learning with
Adaptive Local Steps (FedALS) algorithm by taking into account this trade-off. 

 \begin{remark}\label{note}
  We note that the aggregation frequency of $\Mk[{\phik[]}]$ cannot be reduced arbitrarily, as it would increase the empirical risk. It is proven that the convergence rate of the optimization problem of ERM in a general non-iid setting for a non-convex loss function is $O \left(\frac{\tau}{T} +\left(\frac{\tau}{T}\right)^{\frac{2}{3}} + \frac{1}{\sqrt{T}} \right)$ \citep{unified}. Here $T$ is the total number of iterations, \ie $T = \tau R$.
 \end{remark} 
\begin{algorithm}[tb]
\caption{FedALS}
\label{alg:algorithm1}
\textbf{Input}: Initial model $\{\thetak[k,1,0]=[\phik[k,1,0],\hk[k,1,0]]\}_{k=1}^K$, learning rate $\eta$, number of local steps for the head $\tau$, adaptation coefficient $\alpha$. 
\begin{algorithmic}[1] 
\FOR{Round $r$ in $1,...,R$}
\FOR{Node $k$ in $1,...,K$ \textbf{in parallel}}
\FOR{Local step $t$ in $0,...,\tau -1$}
\STATE Sample the batch $\mathcal{B}_{k,r,t}$ from $\mathcal{D}_k$.
\STATE $\thetak[{k,r,t+1}]$ $=$ $\thetak[{k,r,t}]$ $-$ $\frac{\eta}{|\mathcal{B}_{k,r,t}|}$  $\sum_{i \in \mathcal{B}_{k,r,t}}$ $\nabla l(\mk[{\thetak[{k,r,t}]}],\z{k}{i})$
\IF {$\mod(r\tau + t , \tau)  = 0 $}
\STATE $\hk[{k,r,t}] \leftarrow \frac{1}{K} \sum_{k=1}^K\hk[{k,r,t}]$
\ELSIF {$\mod(r\tau + t ,  \alpha\tau)  = 0 $}
\STATE $\phik[{k,r,t}] \leftarrow \frac{1}{K} \sum_{k=1}^K\phik[{k,r,t}]$
\ENDIF
\ENDFOR
\STATE $\thetak[{k,r+1,0}] = \thetak[{k,r,\tau}]$
\ENDFOR
\ENDFOR
\STATE \textbf{return} $\hat{{\thetak[]}}_R = \frac{1}{K} \sum_{k=1}^K\thetak[{k,R,\tau}]$
\end{algorithmic}
\end{algorithm}

\section{\label{sec:FedALS} FedALS: Federated Learning with Adaptive Local Steps}
Theorem \ref{mani_th} and our key intuition above demonstrate that more local SGD steps (less aggregations at the parameter server) are necessary for representation extractor $\Mk[{\phik[]}]$ as compared to the model's head $\mk[{\hk[]}]$ to reduce generalization error bound. This approach, since it will reduce the aggregation frequency of $\Mk[{\phik[]}]$, will also reduce the communication cost of federated learning. 
%
 

The main idea of FedALS is to maintain a uniform generalization error across both components ($\Mk[{\phik[]}]$ and $\mk[{\hk[]}]$) of the model. This can be achieved if $\tau_{\Mk[{\phik[]}]}$ is set larger than $\tau_{\Mk[{\phik[]}]}$, where $\tau_{\Mk[]}$ denotes the number of local iterations in a single round for the model $\Mk[]$ while $\tau_{\Mk[{\phik[]}]}$ and $\tau_{\Mk[{\hk[]}]}$ are the corresponding number of local iterations for $\Mk[{\phik[]}]$ and $\mk[{\hk[]}]$, respectively. Following this approach, we designed FedALS in Algorithm \ref{alg:algorithm1}.



FedALS in Algorithm \ref{alg:algorithm1} divides the model into two parts: (i) the representation extractor, denoted as $\Mk[{\phik[]}]$, and (ii) the head, denoted as $\Mk[{\hk[]}]$. 
Additionally, we introduce the parameter $\alpha = \frac{\tau_{\Mk[{\phik[]}]}}{\tau_{\Mk[{\hk[]}]}}$ as an adaptation coefficient, which can be regarded as a hyperparameter for estimating the true ratio. Note that this ratio depends on $\mathcal{C}(\Mk[{\phik[]}])$ and $\mathcal{C}(\Mk[{\hk[]}])$, and determining these values is not straightforward.
\section{Experimental Results}

\begin{figure}[tb]
     \centering
     \begin{subfigure}[b]{0.49\textwidth}
         \centering
        \includegraphics[width=\textwidth]{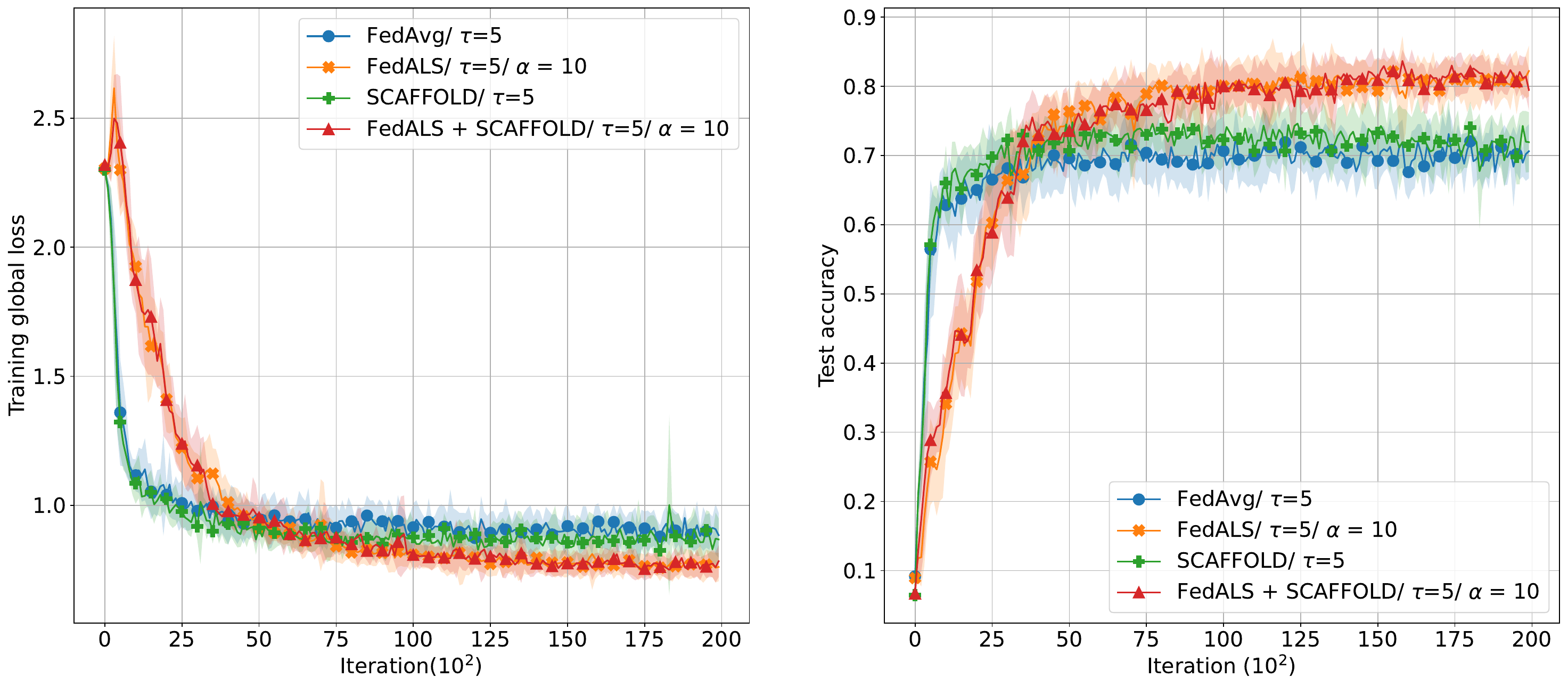}
         \caption{non-iid}
         \label{SVHN_noniid}
     \end{subfigure}
     \hfill
     \begin{subfigure}[b]{0.49\textwidth}
         \centering
        \includegraphics[width=\textwidth]{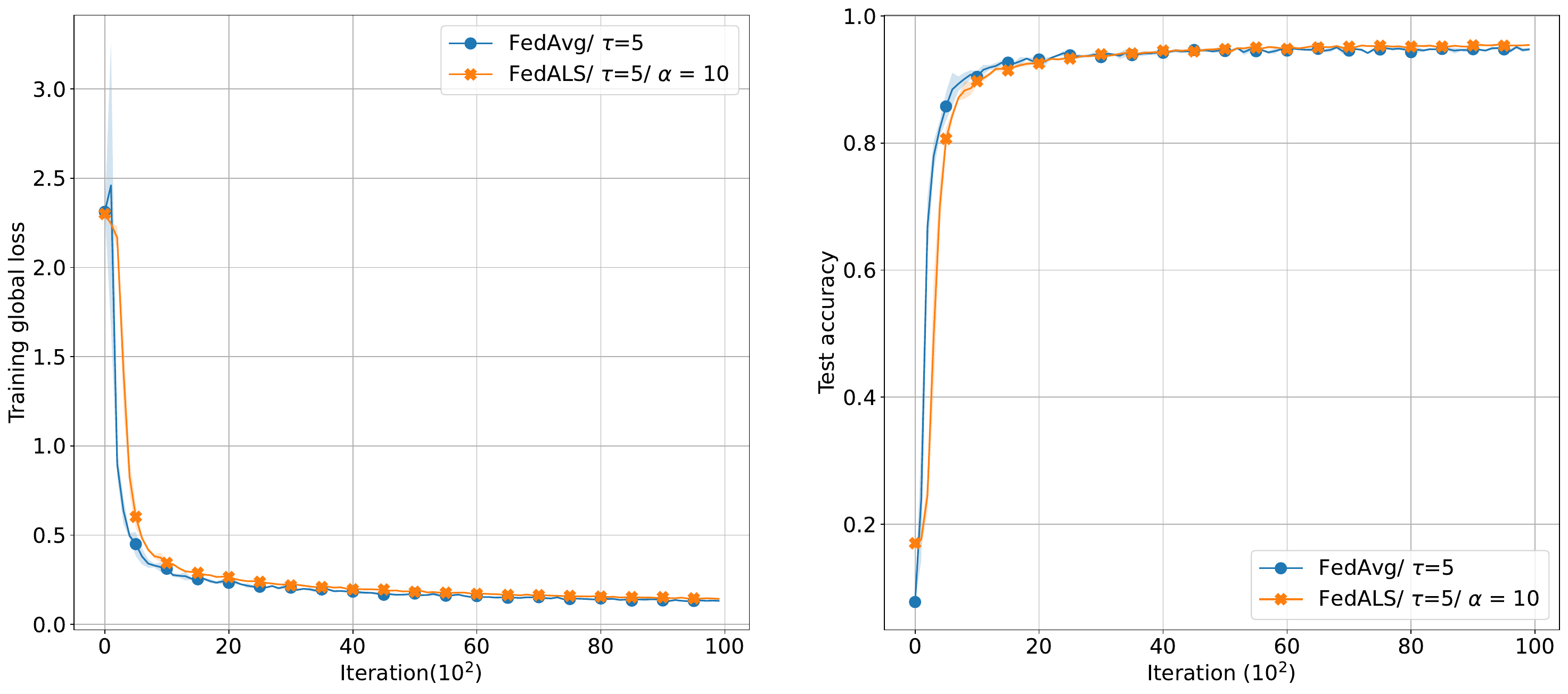}
         \caption{iid}
         \label{SVHM_iid}
     \end{subfigure}
        \vspace{-9pt}
        \caption{Training  ResNet-$20$ on SVHN.}
        \label{SVHN}
        \vspace{-5pt}
\end{figure}

\begin{figure}[tb]
     \centering
     \begin{subfigure}[b]{0.49\textwidth}
         \centering     \includegraphics[width=\textwidth]{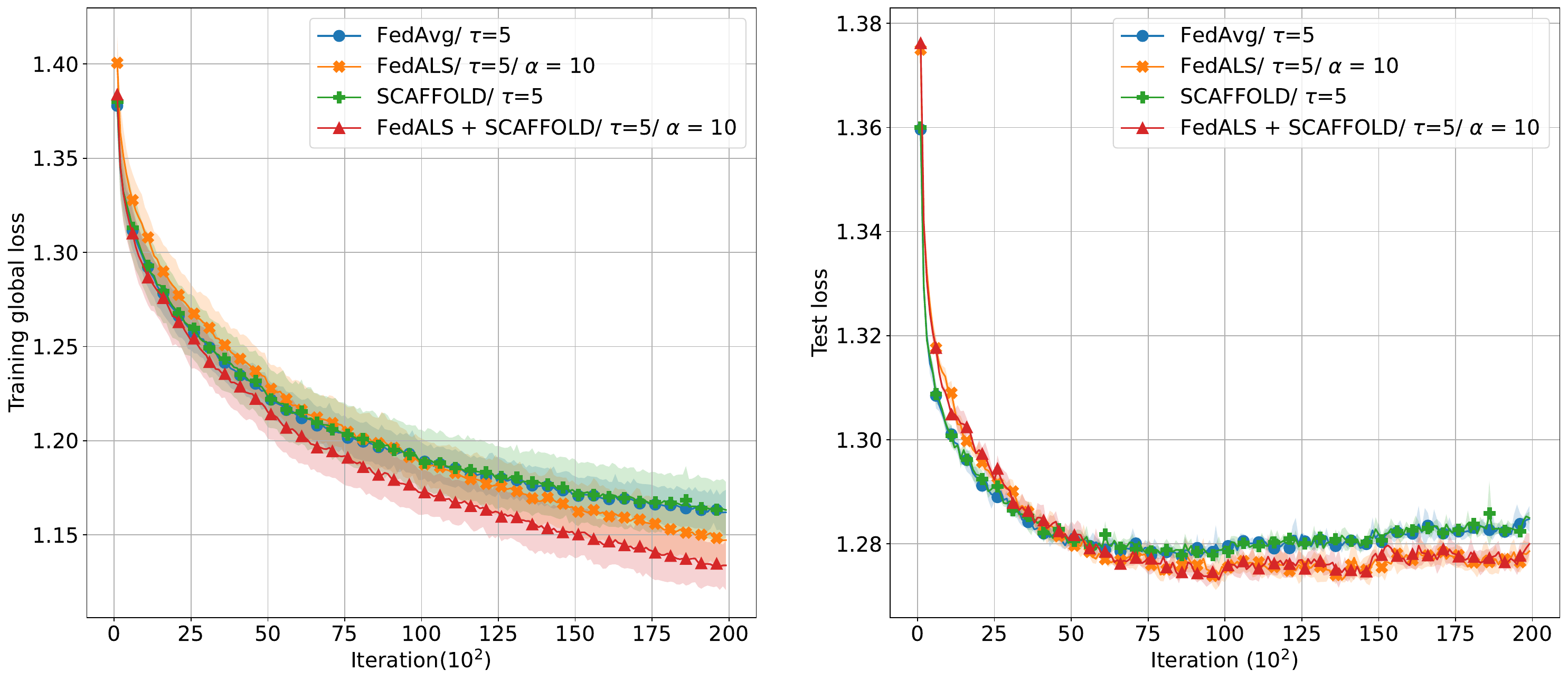}
         \caption{non-iid}
         \label{llm_noniid}
     \end{subfigure}
     \hfill
     \begin{subfigure}[b]{0.49\textwidth}
         \centering
         \includegraphics[width=\textwidth]{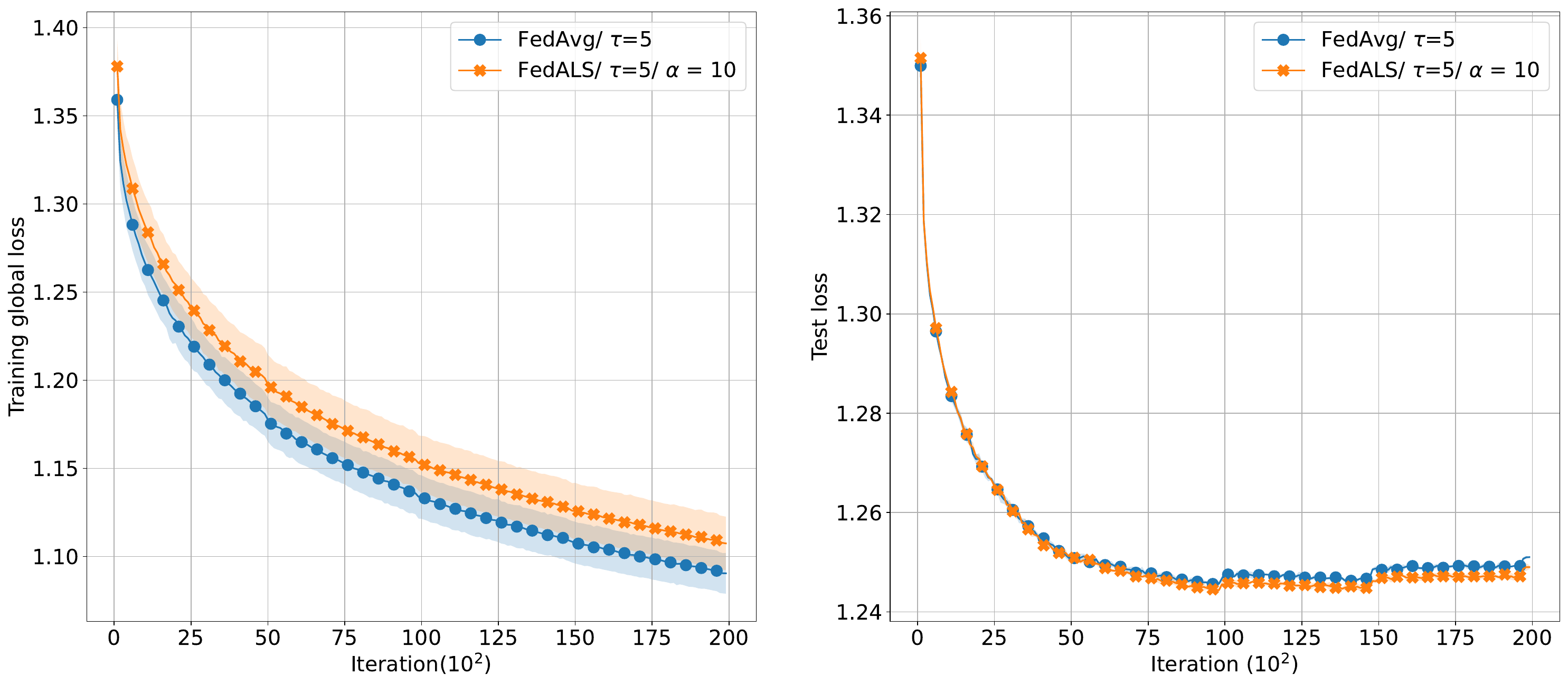}
         \caption{iid}
         \label{llm_iid}
     \end{subfigure}
        \vspace{-9pt}
        \caption{Fine-tuning OPT-$125$M on MultiNLI.}
        \label{llm}
        \vspace{-15pt}
\end{figure}

In this section, we assess the performance of FedALS using ResNet-$20$ as a deep neural network architecture and OPT-$125$M as a large language model. We treat the convolutional layers of ResNet-$20$ as the representation extractor and the final dense layers as the model head. For OPT-$125$M, we consider the first 10 layers of the model as the representation extractor.
We used the datasets CIFAR-$10$, CIFAR-$100$, SVHN, and MNIST for image classification and the Multi-Genre Natural Language Inference (MultiNLI) corpus for the LLM.
The experimentation was conducted on a network consisting of five nodes alongside a central server. For image classification, we utilized a batch size of $64$ per node. SGD with momentum was employed as the optimizer, with the momentum set to $0.9$, and the weight decay to $10^{-4}$.
For the LLM fine-tuning, we employed a batch size of 16 sentences from the corpus, and the optimizer used was AdamW.
In all the experiments, to perform a grid search for the learning rate, we conducted each experiment by multiplying and dividing the learning rate by powers of two, stopping each experiment after reaching a local optimum learning rate.
We repeat each experiment $20$ times and present the error bars associated with the randomness of the optimization. In every figure, we include the average and standard deviation error bars.

\begin{table*}[tb]
\bgroup
\def\arraystretch{1.1}
\vspace{-5pt}
\caption{Test Performance for 5 nodes FL; Accuracy after training ResNet-$20$ and test loss after fine-tuning OPT-$125$M in iid and non-iid settings with $\tau=5$ and $\alpha = 10$.}
\label{Tab}
\begin{center}
{\fontsize{7}{9}\selectfont
\begin{sc}
    \begin{tabular}{l@{\hskip 7pt}c@{\hskip 7pt}c@{\hskip 7pt}c@{\hskip 7pt}c@{\hskip 7pt}c@{\hskip 0pt}c@{\hskip 0pt}c}
        \toprule
        \multirow{2}{*}[-.4em]{Model/Dataset} & \multicolumn{2}{c}{FedAvg} & \multicolumn{2}{c}{FedALS} & \multicolumn{1}{c}{SCAFFOLD} & \multicolumn{1}{c}{FedALS + SCAFFOLD}\\
        \addlinespace[1pt]
        \cmidrule(lr){2-3} \cmidrule(lr){4-5} \cmidrule(lr){6-6} \cmidrule(lr){7-7} \\
        \addlinespace[-10pt]
        {} & iid  & non-iid  & iid & non-iid &  non-iid &  non-iid\\
        \midrule
        ResNet-$20$/SVHN  &  $0.9476 \pm 0.0016$ & $\mathbf{0.7010 \pm 0.0330}$  & $0.9541 \pm 0.0021$ & $\mathbf{0.8117 \pm 0.0214}$ &  $0.7180 \pm 0.04016$ & $0.8097 \pm 0.0271$\\
        ResNet-$20$/CIFAR-$10$  &$0.8761 \pm 0.0091$ & $\mathbf{0.4651 \pm 0.0071}$  & $0.8865 \pm 0.0021$ & $\mathbf{0.5224 \pm 0.0365}$&  $0.4538 \pm 0.0706$ & $0.5121 \pm  0.0098$\\
        ResNet-$20$/CIFAR-$100$  &$0.5999 \pm 0.0068$ & $\mathbf{0.4177 \pm 0.0143}$  & $0.6120 \pm 0.0052$ & $\mathbf{0.4863 \pm 0.0224}$&  $0.4124 \pm 0.0183$ & $0.4820 \pm 0.0129$\\
        ResNet-$20$/MNIST  &$0.9906 \pm 0.0001$ & $\mathbf{0.7967 \pm 0.0635}$  & $0.9914 \pm 0.0001$ & $\mathbf{0.8208 \pm 0.0363}$&  $0.8121 \pm 0.0546$ & $0.7827 \pm 0.1184$\\
        OPT-$125$M/MultiNLI  &$1.2503\pm 0.0003$ & $\mathbf{1.2842\pm 0.0016}$  & $1.2478 \pm 0.0005$ & $\mathbf{1.2773\pm 0.0019}$&  $1.2839\pm 0.0009$ & $1.2782\pm 0.0018$\\
        \bottomrule
    \end{tabular}
    \vspace{-5pt}
\end{sc}
}
\end{center}
\egroup
\end{table*}


\subsection{FedALS in non-iid Setting}
In this section, we allocate the dataset to nodes using a non-iid approach.
For image classification, we initially sorted the data based on their labels and subsequently divided it among nodes following this sorted sequence.
In MultiNLI, we sorted the sentences based on their genre.



In this scenario, we can observe in Fig.  \ref{SVHN_noniid}, \ref{llm_noniid} the anticipated performance improvement through the incorporation of different local steps across the model. By utilizing parameters $\tau = 5$ and $\alpha = 10$ in FedALS, it becomes apparent that aggregation and communication costs are reduced as compared to FedAvg with the same $\tau$ value of 5.
This implies that the initial layers perform aggregation at every $50$ iterations.
This reduction in the number of communications is accompanied by enhanced model generalization stemming from the larger number of local steps in the initial layers, which contributes to an overall performance enhancement.
Thus, our approach in FedALS is beneficial for both communication efficiency and enhancing model generalization performance simultaneously.


\subsection{FedALS in iid Setting}
The results for the iid setting are presented in Fig. \ref{SVHM_iid}, \ref{llm_iid}. In order to obtain these results, the data is shuffled, and then evenly divided among nodes. We note that in this situation, the performance improvement of FedALS is negligible. This is expected since there is a factor of $\frac{1}{K^2}$ in the generalization in this case, ensuring that we will have nearly the same population risk as the empirical risk. Therefore, the deciding factor here is the optimization of the empirical risk, which is improved with a smaller $\tau$ as discussed in Remark \ref{note}. 
Thus, the improvement of the generalization error using the FedALS approach is negligible in this setup. 



\subsection{Compared to and Complementing SCAFFOLD}
\citet{Karimireddy2019SCAFFOLDSC} introduced an innovative technique called SCAFFOLD, which employs some control variables for variance reduction to address the issue of ``client-drift'' in local updates. This drift happens when data is heterogeneous (non-iid), causing individual nodes/clients to converge towards their local optima rather than the global optima. While this approach is a significant theoretical advancement in achieving independence from loss function disparities among nodes, it hinges on the assumption of smoothness in the loss functions, which might not hold true for practical deep learning problems in the real world.
Additionally, since SCAFFOLD requires the transmission of control variables to the central server, which is of the same size as the models themselves, it results in approximately twice the communication overhead when compared to FedAvg.

Let us consider Fig. \ref{SVHN_noniid}, \ref{llm_noniid} to notice that in real-world deep learning situations, FedALS enhances performance significantly, while SCAFFOLD exhibits slight improvements in specific scenarios. Moreover, we integrated FedALS and SCAFFOLD to concurrently leverage both approaches. The results of the test accuracy in different datasets are summarized in Table \ref{Tab}.

\subsection{The Role of \boldmath$\alpha$ and Communication Overhead}


As shown in Table \ref{alpha2}, it becomes evident that when we increase $\alpha$ from $1$ (FedAvg), we initially witness an enhancement in accuracy owing to improved generalization. However, beyond a certain threshold ($\alpha =10$), further increment in $\alpha$ ceases to contribute to performance improvement. This is due to the adverse impact of a high number of local steps on optimization performance indicated in Remark \ref{note}. The trade-off we discussed in the earlier sections is evident in this context. We have also demonstrated the impact of FedALS on the communication overhead in this table.

\begin{table}[t] 
\bgroup
\def\arraystretch{1.2}
\caption{The accuracy and communication overhead per client after training ResNet-$20$ in non-iid setting with $\tau=5$ and variable $\alpha$.}
\label{alpha2}
\begin{center}
{\fontsize{7}{9}\selectfont
\begin{sc}
\begin{tabular}{cccc}
\toprule
\multirow{2}{*}[-.4em]{Value of \boldmath$\alpha$} & \multicolumn{2}{c}{Dataset } &\multirow{1}{*}[-.7em]{\# of communicated}\\
\addlinespace[1pt] \cmidrule(lr){2-3} \\
\addlinespace[-10pt] 
${}$ & SVHN & CIFAR-$10$ &parameters  \\
\midrule
$1$ & $0.7010 \pm 0.0330$ & $0.4651 \pm 0.0071$  & $2.344B$\\
$5$ & $0.8107 \pm 0.0278$& $0.5201 \pm 0.0302$  & $0.473B$\\
$10$ & $\mathbf{0.8117 \pm  0.0214}$ & $\mathbf{0.5224 \pm 0.0365}$  & $0.239B$ \\
$25$ & $0.7201 \pm 0.1565$ & $ 0.3814 \pm 0.0641$  & $0.099B$\\
$50$ & $0.6377 \pm 0.0520$ & $0.2853 \pm 0.0641$  & $0.052B$\\
$100$ & $0.5837 \pm 0.0715$ & $0.2817 \pm 0.032$  & $0.029B$\\
\bottomrule
\vspace{10pt}
\end{tabular} 
\end{sc}
}
\end{center}
\vskip -0.4in
\egroup
\end{table}

\subsection{Different Combinations of $\phik[],\hk[]$}
In Table \ref{alpha3}, we have presented the results of our experiments, illustrating how different combinations of $\phik[]$ and $\hk[]$ influence the model performance in FedALS. The parameter $L$ indicates the number of layers in the model considered as the representation extractor ($\phik[]$), while the remaining layers are considered as $\hk[]$. We observe that for ResNet-$20$, choosing $\phik[]$ to be the first $16$ layers and performing less aggregation for them seems to be the most effective option.

\begin{table}[t] 
\bgroup
\def\arraystretch{1.2}
\caption{Different Combinations of $\phik[],\hk[]$ for training ResNet-$20$ in non-iid setting with $\tau=5$, $\alpha=10$.}
\label{alpha3}
\begin{center}
{\fontsize{7}{9}\selectfont
\begin{sc}
\begin{tabular}{cccc}
\toprule
\multirow{2}{*}[-.4em]{Value of \boldmath$L$} & \multicolumn{3}{c}{Dataset } \\
\addlinespace[1pt] \cmidrule(lr){2-4} \\
\addlinespace[-10pt] 
${}$ & SVHN & CIFAR-$10$ & CIFAR-$100$  \\
\midrule
$20$ & $0.6991 \pm 0.0160$ & $0.4383 \pm 0.0423$  & $0.4781 \pm 0.0123$\\
$16$ & $\mathbf{0.7112 \pm 0.0471}$& $\mathbf{0.4687 \pm 0.0111}$  & $\mathbf{0.4782 \pm 0.0087}$\\
$12$ & $0.6760 \pm  0.0474$ & $0.4125 \pm 0.0283$  & $0.4249 \pm 0.0143$ \\
$8$ & $0.6381 \pm 0.0428$ & $ 0.3779 \pm 0.03451$  & $0.4085 \pm 0.0094$\\
$4$ & $0.6339 \pm 0.0446$ & $0.3730 \pm 0.0310$  & $0.4183 \pm 0.0108$\\
$1$ & $0.6058 \pm 0.0197$ & $0.4013 \pm 0.0308$  & $0.3880 \pm 0.0305$\\
\bottomrule
\vspace{10pt}
\end{tabular} 
\end{sc}
}
\end{center}
\vskip -0.3in
\egroup
\end{table}
\section{Conclusion}

In this paper, we first characterized generalization error bound for one- and R-round federated learning. One-round generalization bound is tighter than the state of the art. Based on our improved generalization bound analysis and our representation learning interpretation of this analysis, we showed for the first time that less frequent aggregations, hence more local updates, for the representation extractor (usually corresponds to initial layers) leads to the creation of more generalizable models, particularly for non-iid scenarios. This insight led us to develop the FedALS algorithm, which centers around the concept of increasing local steps for the initial layers of the deep learning model while conducting more averaging for the final layers. The experimental results demonstrated the effectiveness of FedALS in heterogeneous setups. 

\bibliographystyle{apalike}  
\bibliography{templateArxiv}  

\pagebreak 
\appendix

\section{Proof of Theorem \ref{one-round_lemma}}\label{one-round-proof-append}

We first state and prove the following lemma that will be used in the proof of Theorem \ref{one-round_lemma}.

\begin{lemma}[Leave-one-out] \label{LOO}
[Expansion of Theorem 1 in \cite{InfGen}]

Let $\Sk' = (\Z{k}{1}',...,\Z{k}{n_k}')$, where $\Z{k}{i}'$ is sampled from $\mathcal{D}_k$. 
Denote $\Sk[]^{(k)} = (\Sk[1],...,\Sk',...,\Sk[K])$.
Then
\begin{align}
    \EX_{\{\Sk[k] \sim \mathcal{D}_k^{n_k}\}_{k=1}^K} \Delta _{\mathcal{A}}(\Sk[]) = \EX_{k \sim \mathcal{K},\{\Sk[k], \Sk[k]' \sim \mathcal{D}_k^{n_k}\}_{k=1}^K}\bigg[\frac{1}{n_k} \sum_{i=1}^{n_k} \bigg(l(\mathcal{A}(\Sk[]),\Z{k}{i}') -  l(\mathcal{A}(\Sk[]^{(k)}),\Z{k}{i}')\bigg)\bigg].
\end{align}
\end{lemma}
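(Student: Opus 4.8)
The plan is to unfold the definitions of population and empirical risk and then symmetrize client by client using a ghost (``leave-one-out'') copy of each local dataset. First I would split
\[
\EX_{\Sk[]}\Delta_{\mathcal{A}}(\Sk[]) = \EX_{\Sk[]}\big[R(\mathcal{A}(\Sk[]))\big] - \EX_{\Sk[]}\big[R_{\Sk[]}(\mathcal{A}(\Sk[]))\big],
\]
and expand each term through the definitions of $R$ and $R_{\Sk[]}$, pulling the expectation over $k \sim \mathcal{K}$ outside (legitimate since $\mathcal{K}$ is independent of the data). This presents the population term as $\EX_{k \sim \mathcal{K}}\,\EX_{\Sk[]}\,\EX_{\Zonly \sim \mathcal{D}_k}\, l(\mathcal{A}(\Sk[]),\Zonly)$ and the empirical term as $\EX_{k \sim \mathcal{K}}\,\EX_{\Sk[]}\,\frac{1}{n_k}\sum_{i=1}^{n_k} l(\mathcal{A}(\Sk[]),\Z{k}{i})$.

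Next I would introduce, for every client $k$, an independent ghost dataset $\Sk[k]' \sim \mathcal{D}_k^{n_k}$ with entries $\Z{k}{i}'$ (the ghosts never used simply integrate to $1$). For the population term, since the test point $\Zonly$ is independent of $\Sk[]$ and identically distributed with each $\Z{k}{i}'$, I can replace $\EX_{\Zonly}\, l(\mathcal{A}(\Sk[]),\Zonly)$ by $\frac{1}{n_k}\sum_{i=1}^{n_k}\EX_{\Z{k}{i}'}\, l(\mathcal{A}(\Sk[]),\Z{k}{i}')$, which already matches the first summand on the right-hand side of the lemma. The crucial step is the empirical term: fix the sampled client $k$, condition on the internal randomness of $\mathcal{A}$ so it is a fixed map, and note that $(\Sk[k],\Sk[k]')$ is exchangeable — both are $\mathcal{D}_k^{n_k}$-distributed, mutually independent, and independent of $\{\Sk[j],\Sk[j]'\}_{j \neq k}$. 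Swapping $\Sk[k] \leftrightarrow \Sk[k]'$ therefore preserves the joint law, and under this swap $\Sk[]$ becomes $\Sk[]^{(k)}$ while each $\Z{k}{i}$ becomes $\Z{k}{i}'$, so
\[
\EX\Big[\frac{1}{n_k}\sum_{i=1}^{n_k} l(\mathcal{A}(\Sk[]),\Z{k}{i})\Big] = \EX\Big[\frac{1}{n_k}\sum_{i=1}^{n_k} l(\mathcal{A}(\Sk[]^{(k)}),\Z{k}{i}')\Big].
\]
Subtracting this from the rewritten population term and restoring the outer $\EX_{k \sim \mathcal{K}}$ gives exactly the claimed identity.

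The only genuine difficulty is the bookkeeping: keeping the nested expectations (over $k$, over the $K$ real datasets, over the $K$ ghost datasets, and over the test point) in a consistent order, and justifying the change-of-variables swap when $\mathcal{A}$ is stochastic. Both are handled cleanly by conditioning on $\mathcal{A}$'s internal randomness first and then averaging over it at the end; after that, every remaining manipulation is a substitution of definitions together with linearity of expectation. This argument is the client-wise generalization of the classical single-sample leave-one-out identity of \cite{InfGen}, with the role of ``removing one sample'' played by ``resampling one client's entire dataset.''
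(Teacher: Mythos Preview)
Your proposal is correct and follows essentially the same route as the paper: the paper's proof also splits $\EX_{\Sk[]}\Delta_{\mathcal A}(\Sk[])$ into the population and empirical parts, rewrites the population part using the ghost samples $\Z{k}{i}'$, and handles the empirical part by the same $\Sk[k]\leftrightarrow\Sk[k]'$ swap that turns $\Sk[]$ into $\Sk[]^{(k)}$. Your extra care about conditioning on the internal randomness of $\mathcal A$ is a nice addition that the paper leaves implicit.
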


\begin{proof}
We have
\begin{align}\label{part1}
\EX_{\{\Sk[k] \sim \mathcal{D}_k^{n_k}\}_{k=1}^K} R(\mathcal{A}(\Sk[])) =\EX_{k \sim \mathcal{K}, \{\Sk[k], \Sk[k]' \sim \mathcal{D}_k^{n_k}\}_{k=1}^K} l(\mathcal{A}(\Sk[]),\Z{k}{i}').
\end{align}
Also, observe that 
\begin{align}
    \EX_{\{\Sk[k] \sim \mathcal{D}_k^{n_k}\}_{k=1}^K} R_{\Sk[]}(\mathcal{A}(\Sk[])) &=\EX_{k \sim \mathcal{K},\{\Sk[k] \sim \mathcal{D}_k^{n_k}\}_{k=1}^K} \bigg[\frac{1}{n_k} \sum_{i=1}^{n_k} l(\mathcal{A}(\Sk[]),\Z{k}{i})\bigg]\\
    &=\EX_{k \sim \mathcal{K},\{\Sk[k], \Sk[k]' \sim \mathcal{D}_k^{n_k}\}_{k=1}^K} \bigg[\frac{1}{n_k} \sum_{i=1}^{n_k} l(\mathcal{A}(\Sk[]^{(k)}),\Z{k}{i}')\bigg].\label{part2}
\end{align}

Putting \ref{part1}, and \ref{part2} together, and by the definition of the expected generalization error, we get
\begin{align}
    \EX_{\{\Sk[k] \sim \mathcal{D}_k^{n_k}\}_{k=1}^K} \Delta _{\mathcal{A}}(\Sk[]) = \EX_{k \sim \mathcal{K},\{\Sk[k], \Sk[k]' \sim \mathcal{D}_k^{n_k}\}_{k=1}^K}\bigg[\frac{1}{n_k} \sum_{i=1}^{n_k} \bigg(l(\mathcal{A}(\Sk[]),\Z{k}{i}') -  l(\mathcal{A}(\Sk[]^{(k)}),\Z{k}{i}')\bigg)\bigg].
\end{align}
\end{proof}

In the following lemma, we establish a fundamental generalization bound for a single round of ERM and FedAvg. (Theorem \ref{one-round_lemma}). 

\begin{theorem}
\label{one-round_lemma_ap}
    Let $l(\Mk[{\Thetak[]}],\Zonly)$ be $\mu$-strongly convex and $L$-smooth in $\Mk[{\Thetak[]}]$, $\Mk[{\Thetak[]}_k]=\mathcal{A}_k(\Sk)$ represents the model obtained from Empirical Risk Minimization (ERM) algorithm on local dataset $\Sk$, \ie $\Mk[{\Thetak[]}_k]= \arg \min_{\Mk[]} \sum_{i=1}^{n_k} l(\Mk[],\z{k}{i})$, and  $\Mk[\hat{{\Thetak[]}}]=\mathcal{A}(\Sk[])$ is the model after one round of FedAvg ($\hat{{\Thetak[]}} = \EX_{k \sim \mathcal{K}} \Thetak$).
    Then, the expected generalization error, $\EX_{\{\Sk[k] \sim \mathcal{D}_k^{n_k}\}_{k=1}^K}\Delta_{\mathcal{A}}(\Sk[])$, is bounded by
    \begin{align}
        \EX_{k \sim \mathcal{K}}\bigg[\frac{L\mathcal{K}(k)^2}{\mu}\underbrace{\EX_{\{\Sk[k] \sim \mathcal{D}_k^{n_k}\}} \Delta_{\mathcal{A}_k}(\Sk)}_\text{Expected local generalization} + 2\sqrt{\frac{L}{\mu}}\mathcal{K}(k)\underbrace{\sqrt{\EX_{\{\Sk[k] \sim \mathcal{D}_k^{n_k}\}_{k=1}^K} \delta_{k,\mathcal{A}}(\Sk[])}}_\text{Root of expected non-iidness} \underbrace{\sqrt{\EX_{\{\Sk[k] \sim \mathcal{D}_k^{n_k}\}}\Delta_{\mathcal{A}_k}(\Sk)}}_\text{Root of expected local generalization}\bigg]\label{one_round-ap},
    \end{align}
    where $\delta_{k,\mathcal{A}}(\Sk[]) =  \bigg[R_{\sk}(\mathcal{A}(\Sk[])) - R_{\sk}(\mathcal{A}_k(\Sk))\bigg] $ indicates the level of non-iidness for client $k$ in function $\mathcal{A}$ on dataset $\Sk[]$.
\end{theorem}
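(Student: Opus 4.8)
The plan is to push the leave-one-out identity of Lemma~\ref{LOO} through the $L$-smoothness and $\mu$-strong convexity hypotheses, and then transfer every quantity down to the client level using the linearity of FedAvg aggregation. Write $\hat\theta = \mathcal{A}(\Sk[])$, $\theta_k = \mathcal{A}_k(\Sk)$, and let $\hat\theta^{(k)} = \mathcal{A}(\Sk[]^{(k)})$, $\theta_k' = \mathcal{A}_k(\Sk')$ denote the same objects when client $k$'s data is resampled to $\Sk'$. Since $\hat\theta = \EX_{j\sim\mathcal{K}}\theta_j$ and only client $k$'s local model changes between the two systems, the key structural fact is $\hat\theta - \hat\theta^{(k)} = \mathcal{K}(k)(\theta_k - \theta_k')$.

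The first step is to expand each summand $l(\hat\theta, \Z{k}{i}') - l(\hat\theta^{(k)}, \Z{k}{i}')$ of Lemma~\ref{LOO} by the $L$-smooth quadratic upper bound, $l(\hat\theta,z) - l(\hat\theta^{(k)},z) \le \langle \nabla l(\hat\theta^{(k)},z),\, \hat\theta - \hat\theta^{(k)}\rangle + \tfrac{L}{2}\|\hat\theta - \hat\theta^{(k)}\|^2$. Averaging over $z = \Z{k}{i}'$, $i = 1,\dots,n_k$, and substituting the structural fact, the quadratic part becomes $\tfrac{L}{2}\mathcal{K}(k)^2\|\theta_k - \theta_k'\|^2$, while the linear part collapses to $\mathcal{K}(k)\langle \nabla R_{\Sk'}(\hat\theta^{(k)}),\, \theta_k - \theta_k'\rangle$, because $\{\Z{k}{i}'\}_i$ is exactly the sample on which client $k$ is trained in the perturbed system, so $\tfrac1{n_k}\sum_i \nabla l(\hat\theta^{(k)}, \Z{k}{i}') = \nabla R_{\Sk'}(\hat\theta^{(k)})$.

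Then I would bound the two pieces. For the quadratic piece, $\mu$-strong convexity of $R_{\Sk}$ with $\nabla R_{\Sk}(\theta_k)=0$ gives $\|\theta_k - \theta_k'\|^2 \le \tfrac2\mu\big(R_{\Sk}(\theta_k') - R_{\Sk}(\theta_k)\big)$; taking expectations and using that $\Sk$ and $\Sk'$ are independent (so $\EX R_{\Sk}(\theta_k')$ equals the expected population risk of $\mathcal{A}_k$) turns the right side into $\tfrac2\mu\EX_{\Sk}\Delta_{\mathcal{A}_k}(\Sk)$, which yields the $\tfrac{L\mathcal{K}(k)^2}{\mu}$-term. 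For the linear piece, Cauchy--Schwarz gives $\mathcal{K}(k)\|\nabla R_{\Sk'}(\hat\theta^{(k)})\|\,\|\theta_k-\theta_k'\|$; since $\theta_k'$ is the minimizer of the $L$-smooth function $R_{\Sk'}$, the standard inequality $\|\nabla f(x)\|^2 \le 2L(f(x) - \min f)$ gives $\|\nabla R_{\Sk'}(\hat\theta^{(k)})\|^2 \le 2L\,\delta_{k,\mathcal{A}}(\Sk[]^{(k)})$, where $R_{\Sk'}(\hat\theta^{(k)}) - R_{\Sk'}(\theta_k')$ is identified with the non-iidness functional $\delta_{k,\mathcal{A}}$ evaluated at the perturbed dataset (nonnegative, since $\theta_k'$ minimizes $R_{\Sk'}$, and equal in law to $\delta_{k,\mathcal{A}}(\Sk[])$). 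A further Cauchy--Schwarz on the expectation of $\sqrt{\delta_{k,\mathcal{A}}(\Sk[]^{(k)})}\cdot\|\theta_k-\theta_k'\|$, reusing the second-moment bound on $\|\theta_k-\theta_k'\|$, produces the $2\sqrt{L/\mu}\,\mathcal{K}(k)\big(\EX\delta_{k,\mathcal{A}}(\Sk[])\cdot\EX\Delta_{\mathcal{A}_k}(\Sk)\big)^{1/2}$ term, where the constants combine as $\sqrt{2L}\cdot\sqrt{2/\mu} = 2\sqrt{L/\mu}$. Summing the two bounds and taking $\EX_{k\sim\mathcal{K}}$ gives the claimed inequality.

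The step I expect to be the main obstacle is the treatment of the linear cross-term: one must see that after averaging over the $n_k$ resampled points it is a genuine gradient of the perturbed empirical risk $R_{\Sk'}$ at $\hat\theta^{(k)}$, then exploit that $\theta_k'$ is the \emph{exact} minimizer of that same risk to control the gradient norm by the suboptimality gap, and finally recognize that gap as $\delta_{k,\mathcal{A}}$ on a dataset distributed as $\Sk[]$. The secondary care point is ordering the two Cauchy--Schwarz applications so that the dependence between $\delta_{k,\mathcal{A}}(\Sk[]^{(k)})$ and $\|\theta_k-\theta_k'\|$ is exploited only through second moments, which is what yields the product form $(\EX\delta\cdot\EX\Delta)^{1/2}$ in the statement rather than a joint expectation.
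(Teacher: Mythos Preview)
Your proposal is correct and follows essentially the same route as the paper's proof: leave-one-out identity, $L$-smooth quadratic upper bound, the structural identity $\hat\theta-\hat\theta^{(k)}=\mathcal{K}(k)(\theta_k-\theta_k')$, strong convexity at the local ERM to control $\|\theta_k-\theta_k'\|^2$, the inequality $\|\nabla f(x)\|^2\le 2L(f(x)-\min f)$ for the gradient term, and two applications of Cauchy--Schwarz. The only cosmetic differences are that the paper expands the strong-convexity bound around $\theta_k'$ via $R_{\Sk'}$ (you use $\theta_k$ and $R_{\Sk}$, which is equivalent by exchangeability of $\Sk,\Sk'$) and applies the second Cauchy--Schwarz before inserting the pointwise gradient bound rather than after; neither affects the argument or the constants.
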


\begin{proof}
We again consider $\Sk' = (\Z{k}{1}',...,\Z{k}{n_k}')$, where $\Z{k}{i}'$ is sampled from $\mathcal{D}_k$. 
Let also define $\Sk[]^{(k)} = (\Sk[1],...,\Sk',...,\Sk[K])$.
Based on Lemma \ref{LOO}, we can express the expected generalization error as
\begin{align}
    \EX_{\{\Sk[k] \sim \mathcal{D}_k^{n_k}\}_{k=1}^K} \Delta _{\mathcal{A}}(\Sk[]) = \EX_{k \sim \mathcal{K},\{\Sk[k], \Sk[k]' \sim \mathcal{D}_k^{n_k}\}_{k=1}^K}\bigg[\frac{1}{n_k} \sum_{i=1}^{n_k} \bigg(l(\mathcal{A}(\Sk[]),\Z{k}{i}') -  l(\mathcal{A}(\Sk[]^{(k)}),\Z{k}{i}')\bigg)\bigg].\label{eq0-ap}
\end{align}

Based on $L$-smoothness of $l(\Mk[{\Thetak[]}],\Zonly)$ in $\Mk[{\Thetak[]}]$, we obtain
\begin{align}
    \frac{1}{n_k} \sum_{i=1}^{n_k} \bigg(l(\mathcal{A}(\Sk[]),\Z{k}{i}') -  l(\mathcal{A}(\Sk[]^{(k)}),\Z{k}{i}')\bigg) \leq \inpr{\nabla \frac{1}{n_k} \sum_{i=1}^{n_k} l(\mathcal{A}(\Sk[]^{(k)}),\Z{k}{i}')}{\mathcal{A}(\Sk[]) - \mathcal{A}(\Sk[]^{(k)})}+ \frac{L}{2} \norm{\mathcal{A}(\Sk[]) - \mathcal{A}(\Sk[]^{(k)})},\label{eq1-ap}
\end{align}

where $\inpr{\cdot}{\cdot}, \norm{\cdot}$ indicate Euclidean inner product, and squared $L2$-norm.
Note that (\ref{eq1-ap}) holds due to
\begin{align}
    f(\vec{y}) \leq f(\vec{x}) + \inpr{\nabla f(\vec{x})}{\vec{y} - \vec{x}} + \frac{L}{2} \norm{\vec{y} - \vec{x}}.
\end{align}

We can bound expectation of the inner product term on the right-hand side of (\ref{eq1-ap}) using Cauchy–Schwarz inequality as
\begin{align}
     \EX& _{k \sim \mathcal{K},\{\Sk[k], \Sk[k]' \sim \mathcal{D}_k^{n_k}\}_{k=1}^K} \inpr{\nabla \frac{1}{n_k} \sum_{i=1}^{n_k} l(\mathcal{A}(\Sk[]^{(k)}),\Z{k}{i}')}{\mathcal{A}(\Sk[]) - \mathcal{A}(\Sk[]^{(k)})}\nonumber \\
     &\leq \EX _{k \sim \mathcal{K}}\EX _{\{\Sk[k], \Sk[k]' \sim \mathcal{D}_k^{n_k}\}_{k=1}^K} |\inpr{\nabla \frac{1}{n_k} \sum_{i=1}^{n_k} l(\mathcal{A}(\Sk[]^{(k)}),\Z{k}{i}')}{\mathcal{A}(\Sk[]) - \mathcal{A}(\Sk[]^{(k)})} |\label{eq2-0-ap}\\
     &\leq\EX _{k \sim \mathcal{K}}\bigg[\EX _{\{\Sk[k], \Sk[k]' \sim \mathcal{D}_k^{n_k}\}_{k=1}^K}||\nabla \frac{1}{n_k} \sum_{i=1}^{n_k} l(\mathcal{A}(\Sk[]^{(k)}),\Z{k}{i}')||\quad||\mathcal{A}(\Sk[]) - \mathcal{A}(\Sk[]^{(k)})|| \bigg] \label{eq2-1-ap}\\
     &\leq \EX _{k \sim \mathcal{K}}\sqrt{\EX_{\{\Sk[k], \Sk[k]' \sim \mathcal{D}_k^{n_k}\}_{k=1}^K}\norm{\nabla \frac{1}{n_k} \sum_{i=1}^{n_k} l(\mathcal{A}(\Sk[]^{(k)}),\Z{k}{i}')}\EX_{\{\Sk[k], \Sk[k]' \sim \mathcal{D}_k^{n_k}\}_{k=1}^K}\norm{\mathcal{A}(\Sk[]) - \mathcal{A}(\Sk[]^{(k)})}},\label{eq2-2-ap}
\end{align}
where (\ref{eq2-0-ap}) is true because on the right we have the absolute value. (\ref{eq2-1-ap}), and (\ref{eq2-2-ap}) are based on Cauchy–Schwarz inequality.

Now Let's find an upper bound for $\EX_{\{\Sk[k], \Sk[k]' \sim \mathcal{D}_k^{n_k}\}_{k=1}^K} \norm{\mathcal{A}(\Sk[]) - \mathcal{A}(\Sk[]^{(k)})}$ that appears on the right-hand side of both (\ref{eq1-ap}), and (\ref{eq2-2-ap}).
We obtain
\begin{align}
    \EX_{\{\Sk[k], \Sk[k]' \sim \mathcal{D}_k^{n_k}\}_{k=1}^K} &\norm{\mathcal{A}(\Sk[]) - \mathcal{A}(\Sk[]^{(k)})}\nonumber\\
    & = \EX_{\{\Sk[k], \Sk[k]' \sim \mathcal{D}_k^{n_k}\}_{k=1}^K} \mathcal{K}(k)^2 \norm{\mathcal{A}_k(\Sk) - \mathcal{A}_k(\Sk')} \label{eq4-1-ap} \\
    &\leq \EX_{\{\Sk[k], \Sk[k]' \sim \mathcal{D}_k^{n_k}\}_{k=1}^K} \frac{2\mathcal{K}(k)^2}{\mu} \bigg( R_{\Sk'}(\mathcal{A}_k(\Sk)) - R_{\Sk'}(\mathcal{A}_k(\Sk'))\bigg) \label{eq4-2-ap}\\
    & = \EX_{\{\Sk[k], \Sk[k]' \sim \mathcal{D}_k^{n_k}\}_{k=1}^K}\frac{2\mathcal{K}(k)^2}{\mu}\frac{1}{n_k} \sum_{j=1}^{n_k}\bigg( l(\mathcal{A}_k(\Sk),\Z{k}{j}') -  l(\mathcal{A}_k(\Sk'),\Z{k}{j}') \bigg)\label{eq4-3-ap}\\
    & = \EX_{\{\Sk[k], \Sk[k]' \sim \mathcal{D}_k^{n_k}\}_{k=1}^K}\frac{2\mathcal{K}(k)^2}{\mu} \Delta_{\mathcal{A}_k}(\Sk')\label{eq4-4-ap}\\
    & = \EX_{\{\Sk[k] \sim \mathcal{D}_k^{n_k}\}_{k=1}^K}\frac{2\mathcal{K}(k)^2}{\mu} \Delta_{\mathcal{A}_k}(\Sk)\label{eq4-5-ap},
\end{align}
where (\ref{eq4-1-ap}) proceeds by observing that $\mathcal{A}(\Sk[]^{(k,i)})$ varies solely in the sub-model derived from node $k$, diverging from ${\mathcal{A}(\Sk[])}$, and this discrepancy is magnified by a factor of $\mathcal{K}(k)$ when averaging of all sub-models.
(\ref{eq4-2-ap}) holds due to the $\mu$-strongly convexity of $l(\Mk[{\Thetak[]}],\Zonly)$ in $\Mk[{\Thetak[]}]$ which leads to $\mu$-strongly convexity of $R_{\sk}(\Mk[{\Thetak[]}])$ and the fact that $\mathcal{A}_k(\Sk')$ is derived from the local ERM, \ie $\mathcal{A}_k(\Sk')= \arg \min_{\Mk[]} \bigg(\sum_{i=1}^{n_k} l(\Mk[],\z{k}{i}')\bigg)$ and $\nabla R_{\Sk'}(\mathcal{A}_k(\Sk')) = 0$.
Note that if $f$ is $\mu$-strongly convex, we get
\begin{align}
        f(\vec{x}) - f(\vec{y}) + \frac{\mu}{2} \norm{\vec{x} - \vec{y}} \leq   \inpr{\nabla f(\vec{x})}{\vec{x} - \vec{y}}.
\end{align}
(\ref{eq4-3-ap}), (\ref{eq4-4-ap}) are based on local empirical and population risk definitions.

Now we bound $\EX_{k \sim \mathcal{K},\{\Sk[k], \Sk[k]' \sim \mathcal{D}_k^{n_k}\}_{k=1}^K}\norm{\nabla \frac{1}{n_k} \sum_{i=1}^{n_k} l(\mathcal{A}(\Sk[]^{(k)}),\Z{k}{i}')}$ on the right-hand side of (\ref{eq2-2-ap}). Note that
\begin{align}
    \norm{\nabla \frac{1}{n_k} \sum_{i=1}^{n_k} l(\mathcal{A}(\Sk[]^{(k)}),\Z{k}{i}')}& \leq 2L \bigg( \frac{1}{n_k} \sum_{i=1}^{n_k} l(\mathcal{A}(\Sk[]^{(k)}),\Z{k}{i}') - \frac{1}{n_k} \sum_{i=1}^{n_k} l(\mathcal{A}_k(\Sk'),\Z{k}{i}') \bigg) \label{eq3-1-ap}\\
    & \leq 2L \bigg(R_{\sk'}(\mathcal{A}(\Sk[]^{(k)})) - R_{\sk'}(\mathcal{A}_k(\Sk'))\bigg) \label{eq3-2-ap},
\end{align}
where \ref{eq3-1-ap} is obtained using the fact that for any $L$-smooth function $f$, we have
\begin{align}
\norm{\nabla f(\vec{x})} \leq 2L(f(\vec{x}) - f^*),
\end{align}
and the fact that $\mathcal{A}_k(\Sk')$ is derived from the local ERM, \ie $\mathcal{A}_k(\Sk')= \arg \min_{\Mk[]} \sum_{i=1}^{n_k} l(\Mk[],\z{k}{i}')$.
\ref{eq3-2-ap} is based on the definition of local empirical risk.

Putting (\ref{eq1-ap}) into (\ref{eq0-ap}) and considering (\ref{eq2-2-ap}) we get
    \begin{align}
        \EX&_{\{\Sk[k] \sim \mathcal{D}_k^{n_k}\}_{k=1}^K}\Delta_{\mathcal{A}}(\Sk[]) \nonumber \\
        &\leq \EX_{k \sim \mathcal{K}}\bigg[\EX_{\{\Sk[k], \Sk[k]' \sim \mathcal{D}_k^{n_k}\}_{k=1}^K}\frac{L}{2} \norm{\mathcal{A}(\Sk[]) - \mathcal{A}(\Sk[]^{(k)})} \\
        && \mathllap{+ \sqrt{\EX_{\{\Sk[k], \Sk[k]' \sim \mathcal{D}_k^{n_k}\}_{k=1}^K}\norm{\nabla \frac{1}{n_k} \sum_{i=1}^{n_k} l(\mathcal{A}(\Sk[]^{(k)}),\Z{k}{i}')}\EX_{\{\Sk[k], \Sk[k]' \sim \mathcal{D}_k^{n_k}\}_{k=1}^K}\norm{\mathcal{A}(\Sk[]) - \mathcal{A}(\Sk[]^{(k)})}}\bigg]}\nonumber\\
      &\leq \EX_{k \sim \mathcal{K}}\bigg[\EX_{\{\Sk[k] \sim \mathcal{D}_k^{n_k}\}_{k=1}^K}\frac{L\mathcal{K}(k)^2}{\mu} \Delta_{\mathcal{A}_k}(\Sk) \label{eq5-1-ap}\\
        && \mathllap{+ \sqrt{\EX_{\{\Sk[k], \Sk[k]' \sim \mathcal{D}_k^{n_k}\}_{k=1}^K}2L \bigg(R_{\sk'}(\mathcal{A}(\Sk[]^{(k)})) - R_{\sk'}(\mathcal{A}_k(\Sk'))\bigg) \EX_{\{\Sk[k] \sim \mathcal{D}_k^{n_k}\}_{k=1}^K}\frac{2\mathcal{K}(k)^2}{\mu} \Delta_{\mathcal{A}_k}(\Sk)}\bigg]}\nonumber\\
      &\leq \EX_{k \sim \mathcal{K}}\bigg[\EX_{\{\Sk[k] \sim \mathcal{D}_k^{n_k}\}_{k=1}^K}\frac{L\mathcal{K}(k)^2}{\mu} \Delta_{\mathcal{A}_k}(\Sk) \label{eq5-2-ap}\\
        && \mathllap{+ \sqrt{\EX_{\{\Sk[k] \sim \mathcal{D}_k^{n_k}\}_{k=1}^K}2L \bigg(R_{\sk}(\mathcal{A}(\Sk[])) - R_{\sk}(\mathcal{A}_k(\Sk))\bigg) \EX_{\{\Sk[k] \sim \mathcal{D}_k^{n_k}\}_{k=1}^K}\frac{2\mathcal{K}(k)^2}{\mu} \Delta_{\mathcal{A}_k}(\Sk)}\bigg]}\nonumber\\
     &\leq \EX_{k \sim \mathcal{K}}\bigg[\frac{L\mathcal{K}(k)^2}{\mu}\EX_{\{\Sk[k] \sim \mathcal{D}_k^{n_k}\}_{k=1}^K} \Delta_{\mathcal{A}_k}(\Sk) + 2\sqrt{\frac{L}{\mu}}\mathcal{K}(k)\sqrt{\EX_{\{\Sk[k] \sim \mathcal{D}_k^{n_k}\}_{k=1}^K} \delta_{k,\mathcal{A}}(\Sk[]) \EX_{\{\Sk[k] \sim \mathcal{D}_k^{n_k}\}_{k=1}^K}\Delta_{\mathcal{A}_k}(\Sk)}\bigg]\label{eq5-3-ap}\\,
    \end{align}
where in (\ref{eq5-1-ap}) we have applied (\ref{eq4-5-ap}), and (\ref{eq3-2-ap}).
(\ref{eq5-2-ap}) proceeds by considering that 
\begin{align}
    \EX_{\{\Sk[k], \Sk[k]' \sim \mathcal{D}_k^{n_k}\}_{k=1}^K}\bigg[R_{\sk'}(\mathcal{A}(\Sk[]^{(k)})) - R_{\sk'}(\mathcal{A}_k(\Sk'))\bigg]=\EX_{\{\Sk[k] \sim \mathcal{D}_k^{n_k}\}_{k=1}^K}\bigg[ R_{\sk}(\mathcal{A}(\Sk[])) - R_{\sk}(\mathcal{A}_k(\Sk)) \bigg].
\end{align}
In (\ref{eq5-3-ap}) we have used the definition of $\delta_{k,\mathcal{A}}(\Sk[]) =  \bigg[R_{\sk}(\mathcal{A}(\Sk[])) - R_{\sk}(\mathcal{A}_k(\Sk))\bigg] $.
This completes the proof and provides the following upper bound for $\EX_{\{\Sk[k] \sim \mathcal{D}_k^{n_k}\}_{k=1}^K}\Delta_{\mathcal{A}}(\Sk[])$,
\begin{align}
     \EX_{k \sim \mathcal{K}}\bigg[\frac{L\mathcal{K}(k)^2}{\mu}\EX_{\{\Sk[k] \sim \mathcal{D}_k^{n_k}\}} \Delta_{\mathcal{A}_k}(\Sk) + 2\sqrt{\frac{L}{\mu}}\mathcal{K}(k)\sqrt{\EX_{\{\Sk[k] \sim \mathcal{D}_k^{n_k}\}_{k=1}^K} \delta_{k,\mathcal{A}}(\Sk[]) \EX_{\{\Sk[k] \sim \mathcal{D}_k^{n_k}\}}\Delta_{\mathcal{A}_k}(\Sk)}\bigg].
\end{align}

\end{proof}

\section{Proof of Theorem \ref{mani_th}} \label{ap-b}
Here we provide an identical theorem as Theorem \ref{one-round_lemma_ap}, except that instead of ERM, multiple local stochastic gradient descent steps are used as the local optimizer.

\begin{theorem}
\label{one-round-sgd_lemma_ap}
    Let $l(\Mk[{\Thetak[]}],\Zonly)$ be $\mu$-strongly convex and $L$-smooth in $\Mk[{\Thetak[]}]$, $\Mk[{\Thetak[]}_k]=\mathcal{A}_k(\Sk)$ represents the model obtained by doing multiple local steps as in (\ref{local_sgd}) on local dataset $\Sk$, and  $\Mk[\hat{{\Thetak[]}}]=\mathcal{A}(\Sk[])$ is the model after one round of FedAvg ($\hat{{\Thetak[]}} = \EX_{k \sim \mathcal{K}} \Thetak$).
    Then, the expected generalization error, $\EX_{\{\Sk[k] \sim \mathcal{D}_k^{n_k}\}_{k=1}^K}\Delta_{\mathcal{A}}(\Sk[])$, is bounded by
\begin{align}
     \EX_{k \sim \mathcal{K}}\bigg[&\EX_{\{\Sk[k] \sim \mathcal{D}_k^{n_k}\}_{k=1}^K} \frac{2L\mathcal{K}(k)^2}{\mu} \bigg(\Delta_{\mathcal{A}_k}(\Sk) + 2\epsilon_k(\Sk) \bigg) \\
     &+ {\sqrt \frac{8L}{\mu}} \mathcal{K}(k)\sqrt{\EX_{\{\Sk[k] \sim \mathcal{D}_k^{n_k}\}_{k=1}^K} \bigg(\delta_{k,\mathcal{A}}(\Sk[])+\epsilon_k(\Sk) \bigg) \EX_{\{\Sk[k] \sim \mathcal{D}_k^{n_k}\}_{k=1}^K} \bigg(\Delta_{\mathcal{A}_k}(\Sk) + 2\epsilon_k(\Sk) \bigg)}\bigg]\nonumber,
\end{align}
    where $\epsilon_k(\Sk) =   R_{\sk}(\mathcal{A}_k(\Sk))- R_{\sk}(\mathcal{A}^*(\Sk))$.
\end{theorem}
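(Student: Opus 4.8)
The plan is to follow the same leave-one-out argument used in the proof of Theorem \ref{one-round_lemma_ap}, but to replace every place where the exact ERM optimality of $\mathcal{A}_k(\Sk)$ was invoked (namely the identities $\nabla R_{\Sk'}(\mathcal{A}_k(\Sk')) = 0$ and $\mathcal{A}_k(\Sk') = \arg\min_{\Mk[]} \sum_i l(\Mk[],\z{k}{i}')$) by an approximate-optimality statement that costs us an additive optimization error $\epsilon_k(\Sk) = R_{\sk}(\mathcal{A}_k(\Sk)) - R_{\sk}(\mathcal{A}^*(\Sk))$. Concretely, I would start from Lemma \ref{LOO}, which holds verbatim since it only uses the definitions of $R$ and $R_{\sk[]}$ and the symmetry of resampling, and then reproduce the $L$-smoothness split \eqref{eq1-ap} into an inner-product term and a $\frac{L}{2}\norm{\mathcal{A}(\Sk[]) - \mathcal{A}(\Sk[]^{(k)})}$ term, followed by the Cauchy--Schwarz bound \eqref{eq2-2-ap}.

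The two quantities I must re-bound are $\EX \norm{\mathcal{A}(\Sk[]) - \mathcal{A}(\Sk[]^{(k)})}$ and $\EX \norm{\nabla \frac{1}{n_k}\sum_i l(\mathcal{A}(\Sk[]^{(k)}),\z{k}{i}')}$. For the first, I again use that the two aggregated models differ only in the $k$-th sub-model, giving the factor $\mathcal{K}(k)^2$ and reducing to bounding $\EX \norm{\mathcal{A}_k(\Sk) - \mathcal{A}_k(\Sk')}$. Now, instead of exact first-order optimality, I invoke $\mu$-strong convexity of $R_{\Sk'}$ around its true minimizer $\mathcal{A}^*(\Sk')$: $\frac{\mu}{2}\norm{\mathcal{A}_k(\Sk) - \mathcal{A}^*(\Sk')} \le R_{\Sk'}(\mathcal{A}_k(\Sk)) - R_{\Sk'}(\mathcal{A}^*(\Sk'))$ and similarly $\frac{\mu}{2}\norm{\mathcal{A}_k(\Sk') - \mathcal{A}^*(\Sk')} \le R_{\Sk'}(\mathcal{A}_k(\Sk')) - R_{\Sk'}(\mathcal{A}^*(\Sk')) = \epsilon_k(\Sk')$, then use $\norm{a-b} \le 2\norm{a-c} + 2\norm{b-c}$ with $c = \mathcal{A}^*(\Sk')$. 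Converting the risk gaps to the leave-one-out (primed) quantities and taking expectations, the $R_{\Sk'}(\mathcal{A}_k(\Sk)) - R_{\Sk'}(\mathcal{A}_k(\Sk'))$ piece becomes $\Delta_{\mathcal{A}_k}(\Sk')$ exactly as before (giving $\Delta_{\mathcal{A}_k}(\Sk)$ after renaming), and the two $\epsilon_k$ pieces contribute the extra $2\epsilon_k(\Sk)$, yielding $\EX \norm{\mathcal{A}(\Sk[]) - \mathcal{A}(\Sk[]^{(k)})} \le \EX \frac{2\mathcal{K}(k)^2}{\mu}\big(\Delta_{\mathcal{A}_k}(\Sk) + 2\epsilon_k(\Sk)\big)$. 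For the gradient-norm term, I use $\norm{\nabla f(\vec{x})} \le 2L(f(\vec{x}) - f^*)$ with $f = R_{\sk'}$ and $f^* = R_{\sk'}(\mathcal{A}^*(\Sk'))$, so the bound \eqref{eq3-2-ap} becomes $2L\big(R_{\sk'}(\mathcal{A}(\Sk[]^{(k)})) - R_{\sk'}(\mathcal{A}^*(\Sk'))\big) = 2L\big((R_{\sk'}(\mathcal{A}(\Sk[]^{(k)})) - R_{\sk'}(\mathcal{A}_k(\Sk'))) + \epsilon_k(\Sk')\big)$, i.e. an extra $\epsilon_k(\Sk)$ inside the parenthesis after renaming. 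Plugging these two modified bounds into the assembly step \eqref{eq5-1-ap}--\eqref{eq5-3-ap} and using the same identity relating the primed risk differences to $\delta_{k,\mathcal{A}}(\Sk[])$ delivers exactly the claimed bound.

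The main obstacle — really the only subtle point — is bookkeeping the introduction of the true minimizer $\mathcal{A}^*(\Sk')$ cleanly: I need to make sure the triangle/Young step $\norm{a-b}\le 2\norm{a-c}+2\norm{b-c}$ is applied before taking expectations and that the strong-convexity lower bounds are matched to the correct empirical risk $R_{\Sk'}$ (the primed one, since that is what appears after the leave-one-out identity), so that the cross terms telescope into $\Delta_{\mathcal{A}_k}$ and the residuals collect into precisely $2\epsilon_k$ in the distance term and $\epsilon_k$ in the gradient term. Everything else is mechanical substitution into the proof of Theorem \ref{one-round_lemma_ap}. Note that this theorem is the one-round building block; the full Theorem \ref{mani_th} then follows by further bounding $\epsilon_k(\Sk)$ via a standard $\tau$-step strong-convexity SGD descent lemma (contributing the $\frac{\sigma^2}{\mu\tau} + \frac{L}{\mu}$ terms) and $\Delta_{\mathcal{A}_k}(\Sk)$ via a uniform-convergence bound of order $\sqrt{\mathcal{C}(\Mk[{\Thetak[]}])/|Z_{k,r}|}$, then averaging over rounds $r$, but that assembly is deferred to the remainder of Appendix \ref{ap-b}.
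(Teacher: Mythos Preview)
Your proposal matches the paper's proof essentially step for step: the paper also declares that ``all the steps are exactly the same as in the proof of Theorem~\ref{one-round_lemma_ap} except for the two steps below,'' then inserts $\mathcal{A}^*(\Sk')$ via the same Young-type splitting $\norm{a-b}\le 2(\norm{a-c}+\norm{b-c})$ plus strong convexity at the true ERM minimizer for the distance term, and uses $\norm{\nabla f}\le 2L(f-f^*)$ with $f^*=R_{\sk'}(\mathcal{A}^*(\Sk'))$ for the gradient term, exactly as you describe. The only slip is a constant: your intermediate distance bound should read $\frac{4\mathcal{K}(k)^2}{\mu}\big(\Delta_{\mathcal{A}_k}(\Sk)+2\epsilon_k(\Sk)\big)$ (the factor $2$ from Young and the factor $\tfrac{2}{\mu}$ from strong convexity multiply), which after the $\tfrac{L}{2}$ in \eqref{eq1-ap} yields the stated $\frac{2L\mathcal{K}(k)^2}{\mu}$.
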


\begin{proof}
All the steps are exactly the same as in the proof of theorem \ref{one-round_lemma_ap} except for the two steps below:

First, we use a new upper bound for $\EX_{\{\Sk[k], \Sk[k]' \sim \mathcal{D}_k^{n_k}\}_{k=1}^K} \norm{\mathcal{A}(\Sk[]) - \mathcal{A}(\Sk[]^{(k)})}$ that appears on the right-hand side of both (\ref{eq1-ap}), and (\ref{eq2-2-ap}).
We have
\begin{align}
    \EX&_{\{\Sk[k], \Sk[k]' \sim \mathcal{D}_k^{n_k}\}_{k=1}^K} \norm{\mathcal{A}(\Sk[]) - \mathcal{A}(\Sk[]^{(k)})}\nonumber\\
    & = \EX_{\{\Sk[k], \Sk[k]' \sim \mathcal{D}_k^{n_k}\}_{k=1}^K} \mathcal{K}(k)^2 \norm{\mathcal{A}_k(\Sk) - \mathcal{A}_k(\Sk')} \\
    & = \EX_{\{\Sk[k], \Sk[k]' \sim \mathcal{D}_k^{n_k}\}_{k=1}^K} \mathcal{K}(k)^2 \norm{\mathcal{A}_k(\Sk) - \mathcal{A}^*(\Sk') + \mathcal{A}^*(\Sk')  - \mathcal{A}_k(\Sk')} \label{eq40-ap} \\
    & = \EX_{\{\Sk[k], \Sk[k]' \sim \mathcal{D}_k^{n_k}\}_{k=1}^K} 2 \mathcal{K}(k)^2 \bigg(\norm{\mathcal{A}_k(\Sk) - \mathcal{A}^*(\Sk')} + \norm{\mathcal{A}^*(\Sk')  - \mathcal{A}_k(\Sk')}\bigg) \label{eq41-ap} \\
    & \leq  \EX_{\{\Sk[k], \Sk[k]' \sim \mathcal{D}_k^{n_k}\}_{k=1}^K} \frac{4\mathcal{K}(k)^2}{\mu} \bigg( R_{\Sk'}(\mathcal{A}_k(\Sk)) - R_{\Sk'}(\mathcal{A}^*(\Sk')) + R_{\Sk'}(\mathcal{A}_k(\Sk')) -  R_{\Sk'}(\mathcal{A}^*(\Sk'))\bigg) \label{eq42-ap} \\
    & = \EX_{\{\Sk[k], \Sk[k]' \sim \mathcal{D}_k^{n_k}\}_{k=1}^K} \frac{4\mathcal{K}(k)^2}{\mu} \bigg( R_{\Sk'}(\mathcal{A}_k(\Sk)) - R_{\Sk'}(\mathcal{A}_k(\Sk')) + 2 R_{\Sk'}(\mathcal{A}_k(\Sk')) -  2 R_{\Sk'}(\mathcal{A}^*(\Sk'))\bigg) \label{eq43-ap} \\
    & = \EX_{\{\Sk[k], \Sk[k]' \sim \mathcal{D}_k^{n_k}\}_{k=1}^K}\frac{4\mathcal{K}(k)^2}{\mu}\bigg(\frac{1}{n_k} \sum_{j=1}^{n_k}\big( l(\mathcal{A}_k(\Sk),\Z{k}{j}') -  l(\mathcal{A}_k(\Sk'),\Z{k}{j}') \big)+  2 R_{\Sk'}(\mathcal{A}_k(\Sk')) -  2 R_{\Sk'}(\mathcal{A}^*(\Sk'))\bigg)\\
    & = \EX_{\{\Sk[k], \Sk[k]' \sim \mathcal{D}_k^{n_k}\}_{k=1}^K}\frac{4\mathcal{K}(k)^2}{\mu} \bigg(\Delta_{\mathcal{A}_k}(\Sk') + 2 R_{\Sk'}(\mathcal{A}_k(\Sk')) -  2 R_{\Sk'}(\mathcal{A}^*(\Sk'))\bigg)\\
    & = \EX_{\{\Sk[k] \sim \mathcal{D}_k^{n_k}\}_{k=1}^K}\frac{4\mathcal{K}(k)^2}{\mu} \bigg(\Delta_{\mathcal{A}_k}(\Sk) + 2 R_{\Sk}(\mathcal{A}_k(\Sk)) -  2 R_{\Sk}(\mathcal{A}^*(\Sk))\bigg) \label{eq4-5-ap-new},
\end{align}
where in (\ref{eq40-ap}) $\mathcal{A}^*(\Sk)$ is the ERM on $\Sk$.
(\ref{eq41-ap}) is based on the following inequality.
\begin{align}
    \norm{\sum_{i=1}^n a_i} \leq n \sum_{i=1}^n \norm{a_i}.
\end{align}

Secondly, we derive a new bound for $\EX_{k \sim \mathcal{K},\{\Sk[k], \Sk[k]' \sim \mathcal{D}_k^{n_k}\}_{k=1}^K}\norm{\nabla \frac{1}{n_k} \sum_{i=1}^{n_k} l(\mathcal{A}(\Sk[]^{(k)}),\Z{k}{i}')}$ on the right hand side of (\ref{eq2-2-ap}). We get
\begin{align}
    \norm{\nabla \frac{1}{n_k} \sum_{i=1}^{n_k} l(\mathcal{A}(\Sk[]^{(k)}),\Z{k}{i}')}& \leq 2L \bigg( \frac{1}{n_k} \sum_{i=1}^{n_k} l(\mathcal{A}(\Sk[]^{(k)}),\Z{k}{i}') - \frac{1}{n_k} \sum_{i=1}^{n_k} l(\mathcal{A}^*(\Sk'),\Z{k}{i}') \bigg)\\
    & \leq 2L \bigg(R_{\sk'}(\mathcal{A}(\Sk[]^{(k)})) - R_{\sk'}(\mathcal{A}^*(\Sk'))\bigg)\\
    & \leq 2L \bigg(R_{\sk'}(\mathcal{A}(\Sk[]^{(k)})) -R_{\sk'}(\mathcal{A}_k(\Sk'))+R_{\sk'}(\mathcal{A}_k(\Sk'))- R_{\sk'}(\mathcal{A}^*(\Sk'))\bigg)\\
    & \leq 2L \bigg(\delta_{k,\mathcal{A}}(\Sk[]^{(k)})+R_{\sk'}(\mathcal{A}_k(\Sk'))- R_{\sk'}(\mathcal{A}^*(\Sk'))\bigg) \label{eq3-1-ap-new}.
\end{align}

Let's define $\epsilon_k(\Sk) =   R_{\sk}(\mathcal{A}_k(\Sk))- R_{\sk}(\mathcal{A}^*(\Sk))$.
Putting (\ref{eq1-ap}) into (\ref{eq0-ap}) and considering (\ref{eq2-2-ap}) we get
    \begin{align}
        \EX_{\{\Sk[k] \sim \mathcal{D}_k^{n_k}\}_{k=1}^K}\Delta_{\mathcal{A}}(\Sk[])& \nonumber \leq \EX_{k \sim \mathcal{K}}\bigg[\EX_{\{\Sk[k], \Sk[k]' \sim \mathcal{D}_k^{n_k}\}_{k=1}^K}\frac{L}{2} \norm{\mathcal{A}(\Sk[]) - \mathcal{A}(\Sk[]^{(k)})} \\
        && \mathllap{+ \sqrt{\EX_{\{\Sk[k], \Sk[k]' \sim \mathcal{D}_k^{n_k}\}_{k=1}^K}\norm{\nabla \frac{1}{n_k} \sum_{i=1}^{n_k} l(\mathcal{A}(\Sk[]^{(k)}),\Z{k}{i}')}\EX_{\{\Sk[k], \Sk[k]' \sim \mathcal{D}_k^{n_k}\}_{k=1}^K}\norm{\mathcal{A}(\Sk[]) - \mathcal{A}(\Sk[]^{(k)})}}\bigg]}\nonumber\\
      &\leq \EX_{k \sim \mathcal{K}}\bigg[\EX_{\{\Sk[k] \sim \mathcal{D}_k^{n_k}\}_{k=1}^K} \frac{2L\mathcal{K}(k)^2}{\mu} \bigg(\Delta_{\mathcal{A}_k}(\Sk) + 2\epsilon_k(\Sk) \bigg) \label{eq5-1-ap-new}\\
        && \mathllap{+ \sqrt{\EX_{\{\Sk[k], \Sk[k]' \sim \mathcal{D}_k^{n_k}\}_{k=1}^K} 2L \bigg(\delta_{k,\mathcal{A}}(\Sk[]^{(k)})+\epsilon_k(\Sk') \bigg) \EX_{\{\Sk[k] \sim \mathcal{D}_k^{n_k}\}_{k=1}^K}\frac{4\mathcal{K}(k)^2}{\mu} \bigg(\Delta_{\mathcal{A}_k}(\Sk) + 2\epsilon_k(\Sk) \bigg)}\bigg]}\nonumber\\
    &\leq \EX_{k \sim \mathcal{K}}\bigg[\EX_{\{\Sk[k] \sim \mathcal{D}_k^{n_k}\}_{k=1}^K} \frac{2L\mathcal{K}(k)^2}{\mu} \bigg(\Delta_{\mathcal{A}_k}(\Sk) + 2\epsilon_k(\Sk) \bigg) \\
        && \mathllap{+ {\sqrt \frac{8L}{\mu}} \mathcal{K}(k)\sqrt{\EX_{\{\Sk[k] \sim \mathcal{D}_k^{n_k}\}_{k=1}^K} \bigg(\delta_{k,\mathcal{A}}(\Sk[])+\epsilon_k(\Sk) \bigg) \EX_{\{\Sk[k] \sim \mathcal{D}_k^{n_k}\}_{k=1}^K} \bigg(\Delta_{\mathcal{A}_k}(\Sk) + 2\epsilon_k(\Sk) \bigg)}\bigg]}\nonumber,
    \end{align}
where in (\ref{eq5-1-ap-new}) we have applied (\ref{eq4-5-ap-new}), and (\ref{eq3-1-ap-new}). 
This completes the proof.
\end{proof}

Now, we prove Theorem \ref{mani_th} as follows.
\begin{theorem}\label{mani_th_ap}
Let $l(\Mk[{\Thetak[]}],\Zonly)$ be $\mu$-strongly convex and $L$-smooth in $\Mk[{\Thetak[]}]$.
Local models at round $r$ are calculated by doing $\tau$ local gradient descent steps (\ref{local_sgd}), and the local gradient variance is bounded by $\sigma^2$, \ie $\EX_{\Zonly \sim \mathcal{D}_k} \norm{\nabla l(\Mk[{\Thetak[]}],\Zonly) - \EX_{\Zonly \sim \mathcal{D}_k}\nabla l(\Mk[{\Thetak[]}],\Zonly)}\leq \sigma^2$.
The aggregated model at round $r$, $\Mk[\hat{{\Thetak[]}}_r]$, is obtained by performing FedAvg, and the data points used in round $r$ (\ie $Z_{k,r}$) are sampled without replacement.
The average generalization error bound is
\begin{align}
    \EX_{\Sk[]} \Delta_{FedAvg}(\Sk[]) \hspace{-2pt}\leq \frac{1}{R} \sum_{r=1}^R &\EX_{k \sim \mathcal{K}}\bigg[\frac{2L\mathcal{K}(k)^2}{\mu} A  \label{main-ap}+ \sqrt{\frac{8L}{\mu}}\mathcal{K}(k)(AB)^{\frac{1}{2}}\bigg]\nonumber
\end{align}
where $A = \Tilde{O}\bigg(\sqrt{\frac{\mathcal{C}(\Mk[{\Thetak[]}])}{|Z_{k,r}|}} + \frac{\sigma^2}{\mu \tau} + \frac{L}{\mu}\bigg)$, $B= \Tilde{O}\bigg( \EX_{\{Z_{k,r}\}_{k=1}^K} \delta_{k,\mathcal{A}}(\{Z_{k,r}\}_{k=1}^K) + \frac{\sigma^2}{\mu \tau} + \frac{L}{\mu}\bigg)$, $\Tilde{O}$ hides constants and poly-logarithmic factors,  and $\mathcal{C}(\Mk[{\Thetak[]}])$  shows the complexity of the model class of $\Mk[{\Thetak[]}]$.
\end{theorem}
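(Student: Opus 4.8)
\emph{Proof proposal.}
The plan is to decompose the $R$-round run into $R$ single rounds, apply the one-round SGD generalization bound of Theorem~\ref{one-round-sgd_lemma_ap} to each round, and then average the resulting bounds over $r$. Fix a round $r$ and condition on the aggregated model $\hat\theta_r$ entering that round. Because the samples are drawn without replacement, this incoming model is a function only of the data of rounds $1,\dots,r-1$ and is therefore independent of the round-$r$ batches $Z_{k,r}=\bigcup_t\mathcal{B}_{k,r,t}$, which are fresh i.i.d.\ samples from the $\mathcal{D}_k$. Conditionally on $\hat\theta_r$, producing the round-$r$ aggregate is exactly one FedAvg round: each client runs the local rule $\mathcal{A}_{k,r}=$ ``$\tau$ local SGD steps (\ref{local_sgd}) started from $\hat\theta_r$'' on the dataset $Z_{k,r}$, and the server averages with weights $\mathcal{K}$. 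Hence Theorem~\ref{one-round-sgd_lemma_ap} applies with $Z_{k,r}$ in the role of $\Sk$; taking the outer expectation over $\hat\theta_r$ bounds the round-$r$ term of (\ref{fedAvg_g}) by an expression of the same shape as Theorem~\ref{one-round-sgd_lemma_ap}, driven by the three expected quantities $\EX\,\Delta_{\mathcal{A}_{k,r}}(Z_{k,r})$, $\EX\,\delta_{k,\mathcal{A}}(\{Z_{k,r}\}_k)$, and $\EX\,\epsilon_k(Z_{k,r})$ with $\epsilon_k(Z_{k,r})=R_{Z_{k,r}}(\mathcal{A}_{k,r}(Z_{k,r}))-R_{Z_{k,r}}(\mathcal{A}^*(Z_{k,r}))$.

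The second step is to bound these three quantities. The local optimization gap $\EX\,\epsilon_k(Z_{k,r})$ is the suboptimality of $\tau$ SGD steps on the $\mu$-strongly convex, $L$-smooth empirical objective $R_{Z_{k,r}}$ with gradient-noise variance $\sigma^2$; a standard SGD convergence argument (with an averaged or suitably decaying step size) gives $\EX\,\epsilon_k(Z_{k,r})=\tilde{O}\!\big(\frac{\sigma^2}{\mu\tau}+\frac{L}{\mu}\big)$, the second term absorbing the bounded initial suboptimality times the condition number. For the local generalization $\EX\,\Delta_{\mathcal{A}_{k,r}}(Z_{k,r})$ I would route through the local ERM $\mathcal{A}^*$: its expected generalization is $\tilde{O}\!\big(\sqrt{\mathcal{C}(\Mk[{\Thetak[]}])/|Z_{k,r}|}\big)$ by a uniform-convergence/complexity bound for the model class, while the difference between $\mathcal{A}_{k,r}$ and $\mathcal{A}^*$ --- on both the population and the empirical risk --- is controlled by $\epsilon_k$ through $L$-smoothness and $\mu$-strong convexity; hence $\EX\,\Delta_{\mathcal{A}_{k,r}}(Z_{k,r})=\tilde{O}\!\big(\sqrt{\mathcal{C}(\Mk[{\Thetak[]}])/|Z_{k,r}|}+\frac{\sigma^2}{\mu\tau}+\frac{L}{\mu}\big)$. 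The non-iidness term $\delta_{k,\mathcal{A}}$ is carried through unchanged. Collecting, $\EX\big[\Delta_{\mathcal{A}_{k,r}}(Z_{k,r})+2\epsilon_k(Z_{k,r})\big]\le A$ and $\EX\big[\delta_{k,\mathcal{A}}(\{Z_{k,r}\}_k)+\epsilon_k(Z_{k,r})\big]\le B$ with $A$ and $B$ exactly the quantities in the statement.

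Finally, substituting into the per-round bound from Step~1 and using $\sqrt{\EX[X]\,\EX[Y]}\le\sqrt{AB}$ for the cross term yields a round-$r$ contribution of $\EX_{k\sim\mathcal{K}}\big[\frac{2L\mathcal{K}(k)^2}{\mu}A+\sqrt{\frac{8L}{\mu}}\,\mathcal{K}(k)(AB)^{1/2}\big]$, and $\frac1R\sum_{r=1}^R$ of these is precisely (\ref{main-ap}). The step I expect to be the main obstacle is Step~1: carefully verifying that, under sampling without replacement, the aggregated model evaluated at round $r$ depends on $Z_{k,r}$ only through that round's local SGD and averaging, so that the leave-one-out/stability argument behind Theorem~\ref{one-round-sgd_lemma_ap} applies conditionally and the conditional bound integrates out cleanly against the distribution of the incoming model $\hat\theta_r$. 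A secondary difficulty is pinning down the SGD last-iterate optimization and generalization estimates precisely enough that they assemble into the stated $\tilde{O}(\cdot)$ forms of $A$ and $B$, in particular identifying the $\frac{L}{\mu}$ terms with the residual dependence on the initialization.
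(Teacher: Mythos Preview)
Your proposal is correct and follows essentially the same route as the paper: decompose $\Delta_{FedAvg}$ into a per-round average, apply the one-round SGD bound of Theorem~\ref{one-round-sgd_lemma_ap} to each round on the fresh batches $\{Z_{k,r}\}_k$, and then plug in the PAC/complexity bound for the local ERM generalization and an SGD convergence bound for $\epsilon_k(Z_{k,r})$ to obtain the $\tilde O(\cdot)$ forms of $A$ and $B$. In fact you articulate the conditioning-on-$\hat\theta_r$ and sampling-without-replacement independence more explicitly than the paper does; the paper simply writes the round-$r$ term as $\EX_{\{Z_{k,r}\}}\Delta_{\mathcal{A}}(\{Z_{k,r}\}_{k=1}^K)$ and invokes Theorem~\ref{one-round-sgd_lemma_ap} directly.
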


\begin{proof}
Based on the definition, we have
\begin{align}
    \EX&_{\{\Sk[k] \sim \mathcal{D}_k^{n_k}\}_{k=1}^K} \Delta_{FedAvg}(\Sk[]) = \EX_{\{\Sk[k] \sim \mathcal{D}_k^{n_k}\}_{k=1}^K} \frac{1}{R} \sum_{r=1}^R \EX_{k \sim \mathcal{K}} \bigg[\EX_{\Zonly \sim \mathcal{D}_k} l(\Mk[\hat{\thetak[]}_r],\Zonly) - \frac{1}{|Z_{k,r}|} \sum_{i \in Z_{k,r}}^K l(\Mk[\hat{\thetak[]}_r],\z{k}{i})\bigg]\\
    &= \frac{1}{R} \sum_{r=1}^R \EX_{ \{Z_{k,r} \sim \mathcal{D}_k^{|Z_{k,r}|}\}_{k=1}^K}  \EX_{k \sim \mathcal{K}} \bigg[\EX_{\Zonly \sim \mathcal{D}_k} l(\Mk[\hat{\thetak[]}_r],\Zonly)- \frac{1}{|Z_{k,r}|} \sum_{i \in Z_{k,r}}^K l(\Mk[\hat{\thetak[]}_r],\z{k}{i})\bigg]\\
    &= \frac{1}{R} \sum_{r=1}^R \EX_{ \{Z_{k,r} \sim \mathcal{D}_k^{|Z_{k,r}|}\}_{k=1}^K} \Delta _{\mathcal{A}}(\{Z_{k,r}\}_{k=1}^K) \label{4-3}\\
    &\leq \frac{1}{R} \sum_{r=1}^R \EX_{k \sim \mathcal{K}}\bigg[\EX_{\{Z_{k,r} \sim \mathcal{D}_k^{|Z_{k,r}|}\}} \frac{2L\mathcal{K}(k)^2}{\mu} \bigg(\Delta_{\mathcal{A}_k}(Z_{k,r}) + 2\epsilon_k(Z_{k,r}) \bigg) \label{4-4}\\
    && \mathllap{ + {\sqrt \frac{8L}{\mu}} \mathcal{K}(k)\sqrt{\EX_{\{Z_{k,r} \sim \mathcal{D}_k^{|Z_{k,r}|}\}_{k=1}^K} \bigg(\delta_{k,\mathcal{A}}(\{Z_{k,r}\}_{k=1}^K)+\epsilon_k(Z_{k,r}) \bigg) \EX_{\{Z_{k,r} \sim \mathcal{D}_k^{|Z_{k,r}|}\}} \bigg(\Delta_{\mathcal{A}_k}(Z_{k,r}) + \epsilon_k(Z_{k,r}) \bigg)}\bigg] }\nonumber\\
    &\leq \frac{1}{R} \sum_{r=1}^R \EX_{k \sim \mathcal{K}}\bigg[ \frac{2L\mathcal{K}(k)^2}{\mu} \Tilde{O} \bigg(\sqrt{\frac{\mathcal{C}(\Mk[{\Thetak[]}])}{|Z_{k,r}|}} + \frac{\sigma^2}{\mu \tau} + \frac{L}{\mu} \bigg) \label{4-5}\\
    && \mathllap{ + {\sqrt \frac{8L}{\mu}} \mathcal{K}(k)\sqrt{ \Tilde{O}\bigg( \EX_{\{Z_{k,r} \sim \mathcal{D}_k^{|Z_{k,r}|}\}_{k=1}^K} \delta_{k,\mathcal{A}}(\{Z_{k,r}\}_{k=1}^K)+\frac{\sigma^2}{\mu \tau} + \frac{L}{\mu} \bigg)  \Tilde{O} \bigg(\sqrt{\frac{\mathcal{C}(\Mk[{\Thetak[]}])}{|Z_{k,r}|}} +\frac{\sigma^2}{\mu \tau} + \frac{L}{\mu} \bigg)}\bigg] }\nonumber,
\end{align}
where in (\ref{4-3}), $\mathcal{A}$ represents one-round FedAvg algorithm. In (\ref{4-4}) we have used Theorem \ref{one-round-sgd_lemma_ap}.
In (\ref{4-5}) we have used the conventional statistical learning theory originated with Leslie Valiant’s probably approximately correct (PAC) framework \cite{Valiant1984ATO}.
We have also applied the optimization convergence rate bounds in the literature \cite{Stich2019TheEF}. Note that $\Tilde{O}$ hides constants and poly-logarithmic factors.
\end{proof}

\section{Partial Client Participation Setting}\label{partial-ap}

We first define an empirical risk for the partial participation distribution $\hat{\mathcal{K}}$ on dataset $\sk[]$, where $Supp(\hat{\mathcal{K}}) \neq \{1,\dots, K\}$ and $|Supp(\hat{\mathcal{K}})| = \hat{K} \leq K$,  as
\begin{align}
    R^{\hat{\mathcal{K}}}_{\sk[]}(\mk[{\thetak[]}]) &=  \EX_{k \sim {\hat{\mathcal{K}}}} R_{\sk}(\mk[{\thetak[]}]) =\EX_{k \sim {\hat{\mathcal{K}}}} \frac{1}{n_k} \sum_{i=1}^{n_k} l(\mk[{\thetak[]}],\z{k}{i}),
\end{align}
where ${\hat{\mathcal{K}}}$ is an arbitrary distribution on participating nodes that is a part of all nodes, and 
%
%
%
$R_{\sk}(\mk[{\thetak[]}])$ is the empirical risk for model $\mk[{\thetak[]}]$ on local dataset $\sk$. 
 We further define a partial population risk for model $\mk[{\thetak[]}]$ as
\begin{align}
     R^{\hat{\mathcal{K}}}(\mk[{\thetak[]}]) &= \EX_{k \sim {\hat{\mathcal{K}}}} R_k(\mk[{\thetak[]}]) = \EX_{k \sim {\hat{\mathcal{K}}},\Zonly \sim \mathcal{D}_k} l(\mk[{\thetak[]}],\Zonly),
 \end{align} where $R_k(\mk[{\thetak[]}])$ is the population risk on node $k$'s data distribution.
 
 Now, we can define the generalization error for dataset $\sk[]$ and function $\mathcal{A}(\sk[])$ as
 \begin{align}
     \Delta _{\mathcal{A}}(\sk[]) &= R(\mathcal{A}(\sk[])) - R^{\hat{\mathcal{K}}}_{\sk[]}(\mathcal{A}(\sk[]))\\
     &= \underbrace{R(\mathcal{A}(\sk[])) -  R^{\hat{\mathcal{K}}}(\mathcal{A}(\sk[]))}_\text{Participation gap} + \underbrace{R^{\hat{\mathcal{K}}}(\mathcal{A}(\sk[])) -  R^{\hat{\mathcal{K}}}_{\sk[]}(\mathcal{A}(\sk[]))}_\text{Out-of-sample gap}.\label{participation}
 \end{align}
The expected generalization error is expressed as $\EX_{{\Sk[k] \sim \mathcal{D}k^{n_k}}{k=1}^K} \Delta _{\mathcal{A}}(\Sk[])$. Note that the second term in (\ref{participation}), which is related to the difference between in-sample and out-of-sample loss, can be bounded in the same way as in Theorem \ref{one-round_lemma} and Theorem \ref{mani_th}.
The first term is associated with the participation of not all clients. In the following, we demonstrate that under certain conditions, this term would be zero in expectation.
 
We assume there is a meta-distribution $\mathcal{P}$ supported on all distributions $\hat{K}$.

\begin{lemma}\label{Scheme1}
    Let $\{x_i\}_{i=1}^K$ denote any fixed deterministic sequence. Assume $\mathcal{P}$ is derived by sampling $\hat{K}$ clients with replacement based on distribution $\mathcal{K}$ followed by an  equal probability on all sampled clients, \ie $\hat{\mathcal{K}}(k) = \frac{1}{\hat{K}}$.
    Then
    \begin{align}
        \EX_{\mathcal{P}} \EX_{k \sim {\hat{\mathcal{K}}}} x_k  = \EX_{k \sim \mathcal{K}} x_k, \EX_{\mathcal{P}} \EX_{k \sim {\hat{\mathcal{K}}}} \hat{\mathcal{K}}(k) x_k = \frac{1}{\hat{K}} \EX_{k \sim \mathcal{K}} x_k , \EX_{\mathcal{P}} \EX_{k \sim {\hat{\mathcal{K}}}} \hat{\mathcal{K}}^2(k) x_k = \frac{1}{\hat{K}^2} \EX_{k \sim \mathcal{K}} x_k.
    \end{align}
\end{lemma}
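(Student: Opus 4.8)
The plan is to unfold the definition of the meta-distribution $\mathcal{P}$ and then reduce all three identities to a single linearity-of-expectation computation. First I would fix notation for the two-stage sampling: $\mathcal{P}$ draws a tuple of slots $(i_1,\dots,i_{\hat K})$ with the $i_j$ i.i.d.\ from $\mathcal{K}$, and the induced distribution $\hat{\mathcal{K}}$ is the uniform distribution over these $\hat K$ slots, so $\hat{\mathcal{K}}(k)=\frac{1}{\hat K}$ for every slot $k$ in its support and $\EX_{k\sim\hat{\mathcal{K}}}\, x_k=\frac{1}{\hat K}\sum_{j=1}^{\hat K}x_{i_j}$ for any fixed sequence $\{x_k\}$.

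Next I would establish the first identity directly:
\[
\EX_{\mathcal{P}}\EX_{k\sim\hat{\mathcal{K}}} x_k
=\EX_{(i_1,\dots,i_{\hat K})}\Big[\frac{1}{\hat K}\sum_{j=1}^{\hat K}x_{i_j}\Big]
=\frac{1}{\hat K}\sum_{j=1}^{\hat K}\EX_{i_j\sim\mathcal{K}} x_{i_j}
=\EX_{k\sim\mathcal{K}} x_k,
\]
where the middle equality is linearity of expectation and the last uses that each slot index is marginally distributed as $\mathcal{K}$. For the remaining two identities I would exploit that $\hat{\mathcal{K}}(k)=\frac{1}{\hat K}$ is a deterministic constant on the support of $\hat{\mathcal{K}}$, so it (and its square) factors straight out of the inner expectation: $\EX_{k\sim\hat{\mathcal{K}}}[\hat{\mathcal{K}}(k) x_k]=\frac{1}{\hat K}\EX_{k\sim\hat{\mathcal{K}}} x_k$ and $\EX_{k\sim\hat{\mathcal{K}}}[\hat{\mathcal{K}}^2(k) x_k]=\frac{1}{\hat K^2}\EX_{k\sim\hat{\mathcal{K}}} x_k$. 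Taking $\EX_{\mathcal{P}}$ of both sides and substituting the first identity yields $\frac{1}{\hat K}\EX_{k\sim\mathcal{K}} x_k$ and $\frac{1}{\hat K^2}\EX_{k\sim\mathcal{K}} x_k$ respectively, which are the claimed equalities.

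The only real care needed — and the closest thing to an obstacle here — is the bookkeeping for sampling with replacement: one must treat the $\hat K$ draws as distinct slots even when the same client is selected more than once, so that $\hat{\mathcal{K}}$ is honestly uniform with per-slot weight $\frac{1}{\hat K}$ rather than an aggregated weight proportional to multiplicity, and so that the symbol $\hat{\mathcal{K}}(k)$ appearing inside the expectation refers to this per-slot weight. Once this convention is pinned down there are no analytic estimates involved; each claim follows immediately from linearity of expectation and the i.i.d.\ structure of the draws.
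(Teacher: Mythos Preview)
Your proposal is correct and follows essentially the same approach as the paper's proof: both unfold the inner expectation as $\frac{1}{\hat K}\sum_{j=1}^{\hat K} x_{i_j}$ and then apply linearity of expectation together with the fact that each slot is marginally distributed as $\mathcal{K}$. Your presentation is slightly cleaner in that you factor out the constant $\hat{\mathcal{K}}(k)=\frac{1}{\hat K}$ once and reuse the first identity, whereas the paper repeats the computation three times, and your explicit remark about treating the $\hat K$ draws as distinct slots (so that $\hat{\mathcal{K}}(k)=\frac{1}{\hat K}$ per slot regardless of multiplicity) is a helpful clarification that the paper leaves implicit.
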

\begin{proof}
\begin{align}
    \EX_{\mathcal{P}} \EX_{k \sim {\hat{\mathcal{K}}}} x_k &= \EX_{\mathcal{P}} \frac{1}{\hat{K}} \sum_{i=1}^{\hat{K}}  x_i =  \frac{1}{\hat{K}} \sum_{i=1}^{\hat{K}} \EX_{\mathcal{P}} x_i= \EX_{\mathcal{P}} x_i = \EX_{k \sim \mathcal{K}} x_k\\
    \EX_{\mathcal{P}} \EX_{k \sim {\hat{\mathcal{K}}}} \hat{\mathcal{K}}(k) x_k &= \EX_{\mathcal{P}} \frac{1}{\hat{K}^2} \sum_{i=1}^{\hat{K}}  x_i =  \frac{1}{\hat{K}^2} \sum_{i=1}^{\hat{K}} \EX_{\mathcal{P}} x_i= \frac{1}{\hat{K}}\EX_{\mathcal{P}} x_i = \frac{1}{\hat{K}} \EX_{k \sim \mathcal{K}} x_k\\
    \EX_{\mathcal{P}} \EX_{k \sim {\hat{\mathcal{K}}}} \hat{\mathcal{K}}^2(k) x_k &= \EX_{\mathcal{P}} \frac{1}{\hat{K}^3} \sum_{i=1}^{\hat{K}}  x_i =  \frac{1}{\hat{K}^3} \sum_{i=1}^{\hat{K}} \EX_{\mathcal{P}} x_i= \frac{1}{\hat{K}^2}\EX_{\mathcal{P}} x_i = \frac{1}{\hat{K}^2} \EX_{k \sim \mathcal{K}} x_k
\end{align}   
\end{proof}

\begin{lemma}\label{Scheme2}
    Let $\{x_i\}_{i=1}^K$ denote any fixed deterministic sequence. Assume $\mathcal{P}$ is derived by sampling $\hat{K}$ clients without replacement uniformly at random followed by weighted probability on all sampled clients as $\hat{\mathcal{K}}(k) = \frac{\mathcal{K}(k)K}{\hat{K}}$.
    Then
    \begin{align}
    \EX_{\mathcal{P}} \EX_{k \sim {\hat{\mathcal{K}}}} x_k = \EX_{k \sim \mathcal{K}} x_k,
    \EX_{\mathcal{P}} \EX_{k \sim {\hat{\mathcal{K}}}} \hat{\mathcal{K}}(k) x_k = \frac{K}{\hat{K}}\EX_{k \sim \mathcal{K}} \mathcal{K}(k) x_k,
    \EX_{\mathcal{P}} \EX_{k \sim {\hat{\mathcal{K}}}} \hat{\mathcal{K}}^2(k) x_k = \frac{K^2}{\hat{K}^2}\EX_{k \sim \mathcal{K}} \mathcal{K}^2(k) x_k.
    \end{align}
\end{lemma}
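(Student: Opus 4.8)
The plan is to reduce all three identities to one elementary symmetry fact about uniformly random subsets and then instantiate it three times. Write $\mathcal{S} \subseteq \{1,\dots,K\}$ for the random set of $\hat{K}$ participating clients, so that drawing $\mathcal{P}$ is the same as drawing $\mathcal{S}$ uniformly among the $\binom{K}{\hat{K}}$ subsets of size $\hat{K}$, and read $\EX_{k \sim \hat{\mathcal{K}}}[\cdot]$ as the weighted sum $\sum_{k \in \mathcal{S}} \hat{\mathcal{K}}(k)\,(\cdot)$ with $\hat{\mathcal{K}}(k) = \frac{\mathcal{K}(k)K}{\hat{K}}$. First I would establish the marginal inclusion probability: for any fixed $k$, the number of size-$\hat{K}$ subsets containing $k$ is $\binom{K-1}{\hat{K}-1}$, hence $\Pr_{\mathcal{P}}[k \in \mathcal{S}] = \binom{K-1}{\hat{K}-1}\big/\binom{K}{\hat{K}} = \hat{K}/K$, the same for every $k$.

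Next I would combine this with linearity of expectation applied to indicators. For an arbitrary deterministic sequence $\{g(k)\}_{k=1}^K$, write $\sum_{k \in \mathcal{S}} g(k) = \sum_{k=1}^K \mathbf{1}[k \in \mathcal{S}]\,g(k)$, take $\EX_{\mathcal{P}}$ of both sides, and use the marginal computation to obtain the master identity $\EX_{\mathcal{P}} \sum_{k \in \mathcal{S}} g(k) = \frac{\hat{K}}{K}\sum_{k=1}^K g(k)$. The three claimed equalities then follow by specializing to $g(k) = \hat{\mathcal{K}}(k)x_k$, $g(k) = \hat{\mathcal{K}}(k)^2 x_k$, and $g(k) = \hat{\mathcal{K}}(k)^3 x_k$: substituting $\hat{\mathcal{K}}(k) = \frac{\mathcal{K}(k)K}{\hat{K}}$ and cancelling the surplus factors of $K/\hat{K}$ gives $\sum_k \mathcal{K}(k)x_k$, $\frac{K}{\hat{K}}\sum_k \mathcal{K}(k)^2 x_k$, and $\frac{K^2}{\hat{K}^2}\sum_k \mathcal{K}(k)^3 x_k$ respectively, which I would rewrite as $\EX_{k \sim \mathcal{K}}x_k$, $\frac{K}{\hat{K}}\EX_{k \sim \mathcal{K}}\mathcal{K}(k)x_k$, and $\frac{K^2}{\hat{K}^2}\EX_{k \sim \mathcal{K}}\mathcal{K}^2(k)x_k$ using $\EX_{k \sim \mathcal{K}}f(k) = \sum_k \mathcal{K}(k)f(k)$.

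I do not expect a genuine obstacle: the argument is a one-line symmetry computation followed by bookkeeping that mirrors the proof of Lemma \ref{Scheme1}. The only point deserving a word of care is that the reweighted masses $\hat{\mathcal{K}}(k) = \mathcal{K}(k)K/\hat{K}$ need not sum to one over the random subset $\mathcal{S}$, so $\EX_{k \sim \hat{\mathcal{K}}}$ is really an unbiased weighted sum rather than a genuine expectation; since every step is carried out by linearity of expectation over indicators, no normalization is required and the identities hold exactly as stated.
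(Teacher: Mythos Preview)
Your proposal is correct and follows essentially the same symmetry argument as the paper: both reduce to the fact that under uniform sampling without replacement each client appears with marginal weight $\hat{K}/K$, and then push the outer expectation through the sum. The only cosmetic difference is bookkeeping---the paper sums over the $\hat{K}$ sampled ``slots'' and uses that each slot is marginally uniform over $\{1,\dots,K\}$, whereas you sum over all $K$ clients via inclusion indicators and a single master identity; your version is arguably cleaner and also makes explicit the point (glossed over in the paper) that $\hat{\mathcal{K}}$ need not be a genuine probability distribution on $\mathcal{S}$.
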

\begin{proof}
\begin{align}
    \EX_{\mathcal{P}} \EX_{k \sim {\hat{\mathcal{K}}}} x_k &= \EX_{\mathcal{P}} \frac{K}{\hat{K}} \sum_{i=1}^{\hat{K}}  \mathcal{K}(i) x_i =  \frac{K}{\hat{K}} \sum_{i=1}^{\hat{K}}\EX_{\mathcal{P}}  \mathcal{K}(i) x_i = K \EX_{\mathcal{P}} \mathcal{K}(i) x_i = K \frac{1}{K} \sum_{i=1}^k \mathcal{K}(i) x_i = \EX_{k \sim \mathcal{K}} x_k\\
    \EX_{\mathcal{P}} \EX_{k \sim {\hat{\mathcal{K}}}} \hat{\mathcal{K}}(k) x_k &= \EX_{\mathcal{P}} \frac{K^2}{\hat{K}^2} \sum_{i=1}^{\hat{K}}  \mathcal{K}^2(i) x_i =  \frac{K^2}{\hat{K}^2} \sum_{i=1}^{\hat{K}}\EX_{\mathcal{P}}  \mathcal{K}^2(i) x_i = \frac{K^2}{\hat{K}} \EX_{\mathcal{P}} \mathcal{K}^2(i) x_i = \frac{K^2}{\hat{K}} \frac{1}{K} \sum_{i=1}^k \mathcal{K}^2(i) x_i \\
    && \mathllap{= \frac{K}{\hat{K}}\EX_{k \sim \mathcal{K}} \mathcal{K}(k) x_k}\nonumber\\
    \EX_{\mathcal{P}} \EX_{k \sim {\hat{\mathcal{K}}}} \hat{\mathcal{K}}^2(k) x_k &= \EX_{\mathcal{P}} \frac{K^3}{\hat{K}^3} \sum_{i=1}^{\hat{K}}  \mathcal{K}^3(i) x_i =  \frac{K^3}{\hat{K}^3} \sum_{i=1}^{\hat{K}}\EX_{\mathcal{P}}  \mathcal{K}^3(i) x_i = \frac{K^3}{\hat{K}^2} \EX_{\mathcal{P}} \mathcal{K}^3(i) x_i = \frac{K^3}{\hat{K}^2} \frac{1}{K} \sum_{i=1}^k \mathcal{K}^3(i) x_i \\
    && \mathllap{= \frac{K^2}{\hat{K}^2}\EX_{k \sim \mathcal{K}} \mathcal{K}^2(k) x_k}\nonumber
\end{align}  
\end{proof}

So based on lemmas \ref{Scheme1}, and \ref{Scheme2}, it becomes evident that the expectation of the participation gap in (\ref{participation}) becomes zero for both two methods, \ie
\begin{align}
    \EX_\mathcal{P} \bigg[R(\mathcal{A}(\sk[])) -  R^{\hat{\mathcal{K}}}(\mathcal{A}(\sk[]))\bigg] = 0.
\end{align}

The expected generalization error, $\EX_{\{\Sk[k] \sim \mathcal{D}_k^{n_k}\}_{k=1}^K}\Delta_{\mathcal{A}}(\Sk[])$,will be just the expectation of the second term in \ref{participation} that can be bounded using Lemma \ref{one-round_lemma_ap} by 
\begin{align}
     \EX_{k \sim \hat{\mathcal{K}}}\bigg[\frac{L\hat{\mathcal{K}}(k)^2}{\mu}\EX_{\{\Sk[k] \sim \mathcal{D}_k^{n_k}\}} \Delta_{\mathcal{A}_k}(\Sk) + 2\sqrt{\frac{L}{\mu}}\hat{\mathcal{K}}(k)\sqrt{\EX_{\{\Sk[k] \sim \mathcal{D}_k^{n_k}\}_{k=1}^K} \delta_{k,\mathcal{A}}(\Sk[]) \EX_{\{\Sk[k] \sim \mathcal{D}_k^{n_k}\}}\Delta_{\mathcal{A}_k}(\Sk)}\bigg]. \label{one-round-partial}
\end{align}

If we take the expectation of (\ref{one-round-partial}) with respect to $\mathcal{P}$, and taking into account Lemma \ref{Scheme1}, we get the generalization bound for method $1$ as 
\begin{align}
    \EX_{k \sim \mathcal{K}}\bigg[\frac{L}{\mu \hat{K}^2}\EX_{\{\Sk[k] \sim \mathcal{D}_k^{n_k}\}} \Delta_{\mathcal{A}_k}(\Sk) + 2\sqrt{\frac{L}{\mu}}\frac{1}{\hat{K}}\sqrt{\EX_{\{\Sk[k] \sim \mathcal{D}_k^{n_k}\}_{k=1}^K} \delta_{k,\mathcal{A}}(\Sk[]) \EX_{\{\Sk[k] \sim \mathcal{D}_k^{n_k}\}}\Delta_{\mathcal{A}_k}(\Sk)}\bigg].
\end{align}

For Scheme 2, we can obtain the generalization bound in the same way by taking the expectation of (\ref{one-round-partial}) with respect to $\mathcal{P}$ and considering Lemma \ref{Scheme2}. We get the generalization bound for Method 2 as:
\begin{align}
    \EX_{k \sim \mathcal{K}}\bigg[\frac{L}{\mu }\frac{\mathcal{K}(k)^2K^2}{\hat{K}^2}\EX_{\{\Sk[k] \sim \mathcal{D}_k^{n_k}\}} \Delta_{\mathcal{A}_k}(\Sk) + 2\sqrt{\frac{L}{\mu}}\frac{\mathcal{K}(k)K}{\hat{K}}\sqrt{\EX_{\{\Sk[k] \sim \mathcal{D}_k^{n_k}\}_{k=1}^K} \delta_{k,\mathcal{A}}(\Sk[]) \EX_{\{\Sk[k] \sim \mathcal{D}_k^{n_k}\}}\Delta_{\mathcal{A}_k}(\Sk)}\bigg].
\end{align}

\section{Algorithms Used in the Experiments}

In this section, we have listed our implementation of FedAvg (Algorithm \ref{alg:algorithm2}), SCAFFOLD \cite{Karimireddy2019SCAFFOLDSC} (Algorithm \ref{alg:algorithm3}), and FedALS+SCAFFOLD (Algorithm \ref{alg:algorithm4}).
In Algorithm \ref{alg:algorithm3}, we observe that in addition to the model, SCAFFOLD also keeps track of a state-specific to each client, referred to as the client control variate $\ck[k,r]$.
It is important to recognize that the clients within SCAFFOLD have memory and preserve the $\ck[k,r]$ and $\sum_{k=1}^K\ck[k,r]$ values. Additionally, when $\ck[k,r]$ consistently remains at $0$, SCAFFOLD essentially becomes equivalent to FedAvg.

Algorithm \ref{alg:algorithm4} demonstrates the integration of FedALS and SCAFFOLD.
It is important to observe that in this algorithm, the control variables are fragmented according to different model partitions as distinct local step counts exist for different parts.

\begin{algorithm}[tb]
\caption{FedAvg}
\label{alg:algorithm2}
\textbf{Input}: Initial model $\{\thetak[k,1,0]\}_{k=1}^K$, Learning rate $\eta$, and number of local steps $\tau$.\\
\textbf{Output}: $\hat{{\thetak[]}}_R$
\begin{algorithmic}[1] 
\FOR{Round $r$ in $1,...,R$}
\FOR{Node $k$ in $1,...,K$ \textbf{in parallel}}
\FOR{Local step $t$ in $0,...,\tau -1$}
\STATE Sample the batch $\mathcal{B}_{k,r,t}$ from $\mathcal{D}_k$.
\STATE $\thetak[{k,r,t+1}] \hspace{-2pt}= \thetak[{k,r,t}] - \frac{\eta}{|\mathcal{B}_{k,r,t}|} \sum_{i \in \mathcal{B}_{k,r,t}}\nabla l(\mk[{\thetak[{k,r,t}]}],\z{k}{i})$
\ENDFOR
\STATE $\thetak[{k,r+1,0}] = \frac{1}{K} \sum_{k=1}^K\thetak[{k,r,\tau}]$
\ENDFOR
\ENDFOR
\STATE \textbf{return} $\hat{{\thetak[]}}_R = \frac{1}{K} \sum_{k=1}^K\thetak[{k,R,\tau}]$
\end{algorithmic}
\end{algorithm}

\begin{algorithm}[tb]
\caption{SCAFFOLD}
\label{alg:algorithm3}
\textbf{Input}: Initial model $\{\thetak[k,1,0]\}_{k=1}^K$, Initial control variable $\{\ck[k,1]\}_{k=1}^K$, learning rate $\eta$, and number of local steps $\tau$.\\
\textbf{Output}: $\hat{{\thetak[]}}_R$
\begin{algorithmic}[1] 
\FOR{Round $r$ in $1,...,R$}
\FOR{Node $k$ in $1,...,K$ \textbf{in parallel}}
\FOR{Local step $t$ in $0,...,\tau -1$}
\STATE Sample the batch $\mathcal{B}_{k,r,t}$ from $\mathcal{D}_k$.
\STATE $\thetak[{k,r,t+1}] \hspace{-2pt}= \thetak[{k,r,t}] - \eta \big( \frac{1}{|\mathcal{B}_{k,r,t}|}\sum_{i \in \mathcal{B}_{k,r,t}}\nabla l(\mk[{\thetak[{k,r,t}]}],\z{k}{i}) -\ck[k,r] + \frac{1}{K}\sum_{k=1}^K \ck[k,r] \big)$
\ENDFOR
\STATE $\ck[k,r+1] = \ck[k,r] - \frac{1}{K}\sum_{k=1}^K \ck[k,r] + \frac{1}{\eta \tau}(\thetak[{k,r,0}] - \thetak[{k,r,\tau}] )$
\STATE $\thetak[{k,r+1,0}] = \frac{1}{K} \sum_{k=1}^K\thetak[{k,r,\tau}]$
\ENDFOR
\ENDFOR
\STATE \textbf{return} $\hat{{\thetak[]}}_R = \frac{1}{K} \sum_{k=1}^K\thetak[{k,R,\tau}]$
\end{algorithmic}
\end{algorithm}

\begin{algorithm}[tb]
\caption{FedALS + SCAFFOLD}
\label{alg:algorithm4}
\textbf{Input}: Initial model $\{\thetak[k,1,0]=[\phik[k,1,0],\hk[k,1,0]]\}_{k=1}^K$, Initial control variable $\{\ck[k,1]=[\ck[{k,1}]^{\phik[]},\ck[{k,1}]^{\hk[]}]\}_{k=1}^K$, learning rate $\eta$, number of local steps for the head model $\tau$, adaptation coefficient $\alpha$.  \\
\textbf{Output}: $\hat{{\thetak[]}}_R$
\begin{algorithmic}[1] 
\FOR{Round $r$ in $1,...,R$}
\FOR{Node $k$ in $1,...,K$ \textbf{in parallel}}
\FOR{Local step $t$ in $0,...,\tau -1$}
\STATE Sample the batch $\mathcal{B}_{k,r,t}$ from $\mathcal{D}_k$.
\STATE $\thetak[{k,r,t+1}] \hspace{-2pt}= \thetak[{k,r,t}] - \eta \big( \frac{1}{|\mathcal{B}_{k,r,t}|}\sum_{i \in \mathcal{B}_{k,r,t}}\nabla l(\mk[{\thetak[{k,r,t}]}],\z{k}{i}) -\ck[k,r] + \frac{1}{K}\sum_{k=1}^K \ck[k,r] \big)$
\IF {$\mod(r\tau + t , \tau)  = 0 $}
\STATE $\ck[{k,r}]^{\hk[]} \leftarrow \ck[k,r]^{\hk[]} - \frac{1}{K}\sum_{k=1}^K \ck[k,r]^{\hk[]} + \frac{1}{\eta \tau}(\hk[{k,r,0}] - \hk[{k,r,t}] )$
\STATE $\hk[{k,r,t}] \leftarrow \frac{1}{K} \sum_{k=1}^K\hk[{k,r,t}]$
\ELSIF {$\mod(r\tau + t ,  \alpha\tau)  = 0 $}
\STATE $\ck[{k,r}]^{\phik[]} \leftarrow \ck[k,r]^{\phik[]} -\frac{1}{K}\sum_{k=1}^K \ck[k,r]^{\phik[]} + \frac{1}{\eta \alpha\tau}(\phik[{k,{\lfloor \frac{r\tau + t - \alpha\tau }{\tau} \rfloor},\mod(r\tau + t - \alpha\tau ,\tau)}]^l - \phik[{k,r,t}]^l )$
\STATE $\phik[{k,r,t}] \leftarrow \frac{1}{K} \sum_{k=1}^K\phik[{k,r,t}]$
\ENDIF
\ENDFOR
\STATE $\ck[{k,r+1}] = \ck[{k,r}]$
\STATE $\thetak[{k,r+1,0}] = \thetak[{k,r,\tau}]$
\ENDFOR
\ENDFOR
\STATE \textbf{return} $\hat{{\thetak[]}}_R = \frac{1}{K} \sum_{k=1}^K\thetak[{k,R,\tau}]$
\end{algorithmic}
\end{algorithm}

\end{document}